\newcommand*\bigcdot{\mathpalette\bigcdot@{.5}}
\newcommand*\bigcdot@[2]{\mathbin{\vcenter{\hbox{\scalebox{#2}{$\m@th#1\bullet$}}}}}
\DeclareMathOperator*{\argmin}{arg\,min}
\DeclareMathOperator*{\argmax}{arg\,max}
\newtheorem{theorem}{Theorem}
\newtheorem{lemma}{Lemma}
\newcommand{\RSA}{\mathbb{R}^{|\mathcal{S}|\times|\mathcal{A}|}}
\newcommand{\RS}{\mathbb{R}^{|\mathcal{S}|}}
\newcommand{\AdaBracket}[1]{\left(#1\right)}
\newcommand{\AdaRectBracket}[1]{\left[#1\right]}
\newcommand{\AdaAngleProduct}[2]{\left\langle#1, #2\right\rangle}
\newcommand{\sumJK}[1]{\sum_{j=#1}^{k}}
\newcommand{\KLindex}[2]{D_{\!K\!L}\!\AdaBracket{\pi_{#1}|| \pi_{#2}}}
\newcommand{\KLany}[2]{D_{\!K\!L}\!\left(#1 \left| \! \right| #2 \right)}
\newcommand{\entropy}{\mathcal{H}\left( \pi \right)}
\newcommand{\entropyIndex}[1]{\mathcal{H}\left( \pi_{#1} \right)}
\newcommand{\MyNorm}[2]{\left|\!\left| #1 \right|\!\right|_{#2}}
\newcommand{\greedyOmega}[1]{\argmax_{\pi}\!\AdaBracket{\AdaAngleProduct{\pi}{#1} - \tau\Omega(\pi)}}
\newcommand{\greedyTKL}[1]{\argmax_{\pi}\!\AdaAngleProduct{\pi}{#1 - \qKLany{\pi}{\pi_k}}}
\newcommand{\BellmanOmega}[2]{r + \gamma P\!\AdaBracket{ \AdaAngleProduct{\pi_{#1}}{Q_{#2}} - \tau\Omega(\pi_{#1})}}
\newcommand{\BellmanTKL}[2]{r + \gamma P\!\AdaAngleProduct{\pi_{#1}}{\TsallisQ{#2} - \qKLindex{k+1}{k}}}
\newcommand{\eq}[1]{Eq.\,(#1)}
\newcommand{\qlog}{$q$-logarithm }
\newcommand{\tsallis}[1]{S_q(#1)}
\newcommand{\qKLindex}[2]{D^{q}_{\!K\!L}\!\AdaBracket{\pi_{#1}|| \pi_{#2}}}
\newcommand{\qKLany}[2]{D^{q}_{\!K\!L}\!\left(#1 \left| \! \right| #2 \right)}
\newcommand{\qstarKLany}[2]{D^{q^*}_{\!K\!L}\!\left(#1 \left| \! \right| #2 \right)}
\newcommand{\logqstar}[1]{\ln_{q}\!#1}
\newcommand{\expqstar}[1]{\exp_{q}\!#1}
\newcommand{\TsallisQ}[1]{Q_{#1}}
\title{General Munchausen Reinforcement Learning with Tsallis Kullback-Leibler Divergence}
\author{%
  Lingwei Zhu\\
  University of Alberta\\
  \texttt{lingwei4@ualberta.ca} \\
    \And
  Zheng Chen \\
  Osaka University \\
    \texttt{chenz@sanken.osaka-u.ac.jp} \\
  \AND
  Matthew Schlegel \\
  University of Alberta \\
  \texttt{mkschleg@ualberta.ca}
    \And
  Martha White \\
  University of Alberta \\
  CIFAR Canada AI Chair, Amii\\
  \texttt{whitem@ualberta.ca}\\
}
\begin{document}

\maketitle

\begin{abstract}
    Many policy optimization approaches in reinforcement learning incorporate a Kullback-Leilbler (KL) divergence to the previous policy, to prevent the policy from changing too quickly. This idea was initially proposed in a seminal paper on Conservative Policy Iteration, with approximations given by algorithms like TRPO and Munchausen Value Iteration (MVI). We continue this line of work by investigating a generalized KL divergence---called the Tsallis KL divergence.
    Tsallis KL defined by the $q$-logarithm is a strict generalization, as $q = 1$ corresponds to the standard KL divergence; $q > 1$ provides a range of new options.
    We characterize the types of policies learned under the Tsallis KL, and motivate when $q >1$ could be beneficial. 
    To obtain a practical algorithm that incorporates Tsallis KL regularization, we extend MVI, which is one of the simplest approaches to incorporate KL regularization. We show that this generalized MVI($q$) obtains significant improvements over the standard MVI($q = 1$) across 35 Atari games. 
    \end{abstract}
    
    \section{Introduction}
    \label{sec:intro}
    
    There is ample theoretical evidence that it is useful to incorporate KL regularization into policy optimization in reinforcement learning. The most basic approach is to regularize towards a uniform policy, resulting in entropy regularization. More effective, however, is to regularize towards the previous policy.
    By choosing KL regularization between consecutively updated policies, the optimal policy becomes a softmax over a uniform average of the full history of action value estimates \citep{vieillard2020leverage}.
    This averaging smooths out noise, allowing for better theoretical results \citep{azar2012dynamic,kozunoCVI,vieillard2020leverage,Kozuno-2022-KLGenerativeMinMaxOptimal}.
    
    Despite these theoretical benefits, there are some issues with using KL regularization in practice.
    It is well-known that the uniform average is susceptible to outliers; this issue is inherent to KL divergence \citep{Futami2018-robustDivergence}. 
    In practice, heuristics such as assigning vanishing regularization coefficients to some estimates have been implemented widely to increase robustness and accelerate learning \citep{grau-moya2018soft,haarnoja-SAC2018,Kitamura2021-GVI}.
    However, theoretical guarantees no longer hold for those heuristics \citep{vieillard2020leverage,Kozuno-2022-KLGenerativeMinMaxOptimal}.
    A natural question is what alternatives we can consider to this KL divergence regularization, that allows us to overcome some of these disadvantages while maintaining the benefits associate with restricting aggressive policy changes and smoothing errors.
     
    In this work, we explore one possible direction by generalizing to Tsallis KL divergences. Tsallis KL divergences were introduced for physics \citep{TsallisEntropy,tsallis2009introduction} using a simple idea: replacing the use of the logarithm with the deformed $q$-logarithm.
    The implications for policy optimization, however, are that we get quite a different form for the resulting policy. Tsallis \emph{entropy} with $q = 2$ has actually already been considered for policy optimization \citep{Nachum18a-tsallis,Lee2018-TsallisRAL}, by replacing Shannon entropy with Tsallis entropy to maintain stochasticity in the policy. The resulting policies are called \emph{sparsemax} policies, because they concentrate the probability on higher-valued actions and truncate the probability to zero for lower-valued actions. Intuitively, this should have the benefit of maintaining stochasticity, but only amongst the most promising actions, unlike the Boltzmann policy which maintains nonzero probability on all actions. Unfortunately, using only Tsallis entropy did not provide significant benefits, and in fact often performed worse than existing methods. We find, however, that using a Tsallis KL divergence to the previous policy does provide notable gains. 
    
    We first show how to incorporate Tsallis KL regularization into the standard value iteration updates, and prove that we maintain convergence under this generalization from KL regularization to Tsallis KL regularization. We then characterize the types of policies learned under Tsallis KL, highlighting that there is now a more complex relationship to past action-values than a simple uniform average. We then show how to extend Munchausen Value Iteration (MVI) \citep{vieillard2020munchausen}, to use Tsallis KL regularization, which we call MVI($q$). We use this naming convention to highlight that this is a strict generalization of MVI: by setting $q = 1$, we exactly recover MVI. 
    We then compare MVI($q = 2$) with MVI (namely the standard choice where $q = 1$), and find that we obtain significant performance improvements in Atari. 

\textbf{Remark:} There is a growing body of literature studying
 generalizations of KL divergence in RL \citep{Nachum2019-Algaedice,Zhang2020-GenDICE}. 
 \citet{Futami2018-robustDivergence} discussed the inherent drawback of KL divergence in generative modeling and proposed to use $\beta$- and $\gamma$-divergence to allow for weighted average of sample contribution.
These divergences fall under the category known as the $f$-divergence \citep{Sason2016-fDiverInequalities}, commonly used in other machine learning domains including  generative modeling \citep{Nowozin2016-fGAN,Wan2020-fDivVI,Yu2020-EBMwithFDiv} and imitation learning \citep{Ghasemipour2019-divergenceMinPerspImitation,Ke2019-imitationLearningAsfDiv}.
%
In RL, \citet{Wang2018-tailAdaptivefDiv} discussed using tail adaptive $f$-divergence to enforce the mass-covering property.
\citet{Belousov2019-AlphaDivergence} discussed the use of $\alpha$-divergence.
Tsallis KL divergence, however, has not yet been studied in RL.

\section{Problem Setting}\label{sec:preliminary}

We focus on discrete-time discounted Markov Decision Processes (MDPs) expressed by the tuple $(\mathcal{S}, \mathcal{A}, d, P,  r,  \gamma)$, where $\mathcal{S}$ and $\mathcal{A}$ denote state space and finite action space, respectively. 
Let $\Delta(\mathcal{X})$ denote the set of probability distributions over $\mathcal{X}$.
$d\in\Delta(\mathcal{S})$ denotes the initial state distribution.
$P: \mathcal{S}\times\mathcal{A} \rightarrow \Delta(\mathcal{S})$ denotes the transition probability function,  
and $r(s,a)$ defines the reward associated with that transition.
$\gamma \in (0,1)$ is the discount factor.
A policy $\pi:\mathcal{S}\rightarrow \Delta(\mathcal{A})$ is a mapping from the state space to distributions over actions.
We define the action value function following policy $\pi$ and starting from $s_0 \sim d(\cdot)$ with action $a_0$ taken as   $Q_{\pi}(s,a) = \mathbb{E}_{\pi}\AdaRectBracket{\sum_{t=0}^{\infty} \gamma^t r_t | s_0=s, a_0 = a }$.
A standard approach to find the optimal value function $Q_*$ is value iteration. To define the formulas for value iteration, it will be convenient to write the action value function as a matrix $Q_{\pi} \!\in\! \RSA$. 
For notational convenience, we define the inner product for any two functions $F_1,F_2\in\RSA$ over actions as $\AdaAngleProduct{F_1}{F_2} \in \RS$.

We are interested in the entropy-regularized MDPs where the recursion is augmented with $\Omega(\pi)$:
\begin{align}
  \begin{cases}
      \pi_{k+1} = \greedyOmega{Q_{k}}, & \\
      Q_{k+1} = \BellmanOmega{k+1}{k} &
  \end{cases}
  \label{eq:regularizedPI}
\end{align}
This modified recursion is guaranteed to converge if $\Omega$ is concave in $\pi$. 
For standard (Shannon) entropy regularization, we use $\Omega(\pi) = -\entropy = \AdaAngleProduct{\pi}{\ln\pi}$. The resulting optimal policy has $\pi_{k+1} \propto \exp\AdaBracket{\tau^{-1}Q_k}$, where $\propto$ indicates \emph{proportional to} up to a constant not depending on actions.

More generally, we can consider a broad class of regularizers known as $f$-divergences \citep{Sason2016-fDiverInequalities}: $\Omega(\pi) = D_{f}(\pi||\mu):=\AdaAngleProduct{\mu}{f\AdaBracket{\pi/\mu}}$, where $f$ is a convex function.
 For example, the KL divergence $\KLany{\pi}{\mu} = \AdaAngleProduct{\pi}{\ln{\pi} \!-\!\ln{\mu}}$ can be recovered by $f(t) = -\ln t$.  
 In this work, when we say KL regularization, we mean the standard choice of setting $\mu = \pi_k$, the estimate from the previous update.
Therefore, $D_\text{KL}$ serves as a penalty to penalize aggressive policy changes.
The optimal policy in this case takes the form $\pi_{k+1}\propto\pi_k\exp\AdaBracket{\tau^{-1} Q_k}$.
By induction, we can show this KL-regularized optimal policy $\pi_{k+1}$ is a softmax over a uniform average over the history of action value estimates \citep{vieillard2020leverage}:
  $\pi_{k+1} \!\propto \pi_k \exp\AdaBracket{\tau^{-1}  Q_k}  \!\propto\! \cdots \!\propto\! \exp\AdaBracket{\! \tau^{-1}\sumJK{1}Q_j}$.
Using KL regularization has been shown to be theoretically superior to entropy regularization in terms of error tolerance \citep{azar2012dynamic,vieillard2020leverage,Kozuno-2022-KLGenerativeMinMaxOptimal,chan2021-greedification}.

The definitions of $\mathcal{H}(\cdot)$ and $D_\text{KL}(\cdot||\cdot)$ rely on the standard logarithm and  both induce softmax policies as an exponential (inverse function) over (weighted) action-values \citep{hiriart2004-convexanalysis,Nachum2020-RLFenchelRockafellar}. 
Convergence properties of the resulting regularized algorithms have been well studied \citep{kozunoCVI,geist19-regularized,vieillard2020leverage}.
In this paper, we investigate Tsallis entropy and Tsallis KL divergence as the regularizer, which generalize Shannon entropy and KL divergence respectively.

\section{Generalizing to Tsallis Regularization}\label{sec:qlog}

We can easily incorporate other regularizers in to the value iteration recursion, and maintain convergence as long as those regularizers are strongly convex in $\pi$. 
We characterize the types of policies that arise from using this regularizer, and prove the convergence of resulting regularized recursion. 

\begin{figure}[t]
  \centering
\begin{minipage}[t]{0.33\textwidth}
\centering
\includegraphics[width=.99\linewidth]{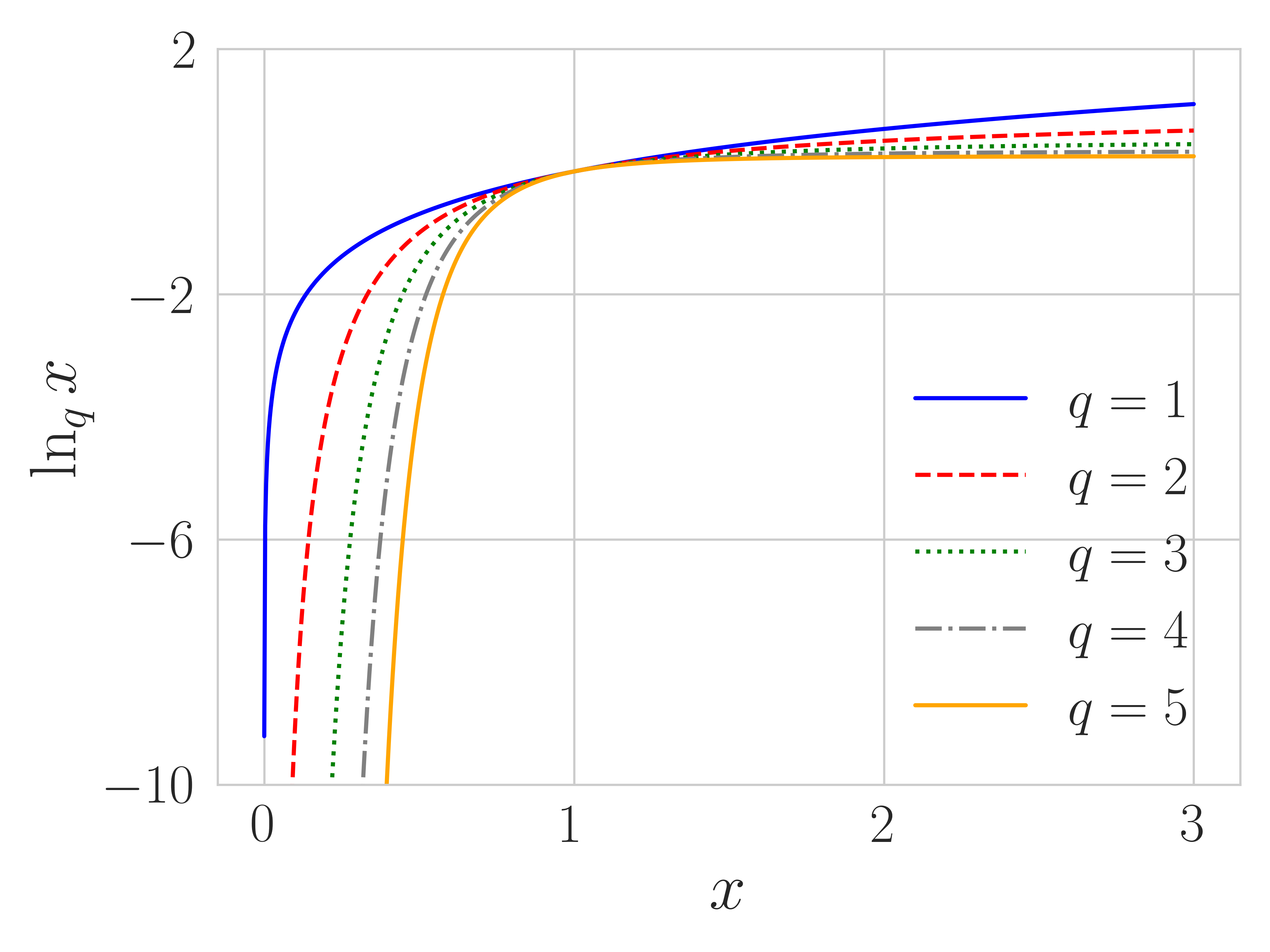}
  \end{minipage}\hfill
\begin{minipage}[t]{0.33\textwidth}
 \includegraphics[width=0.99\linewidth]{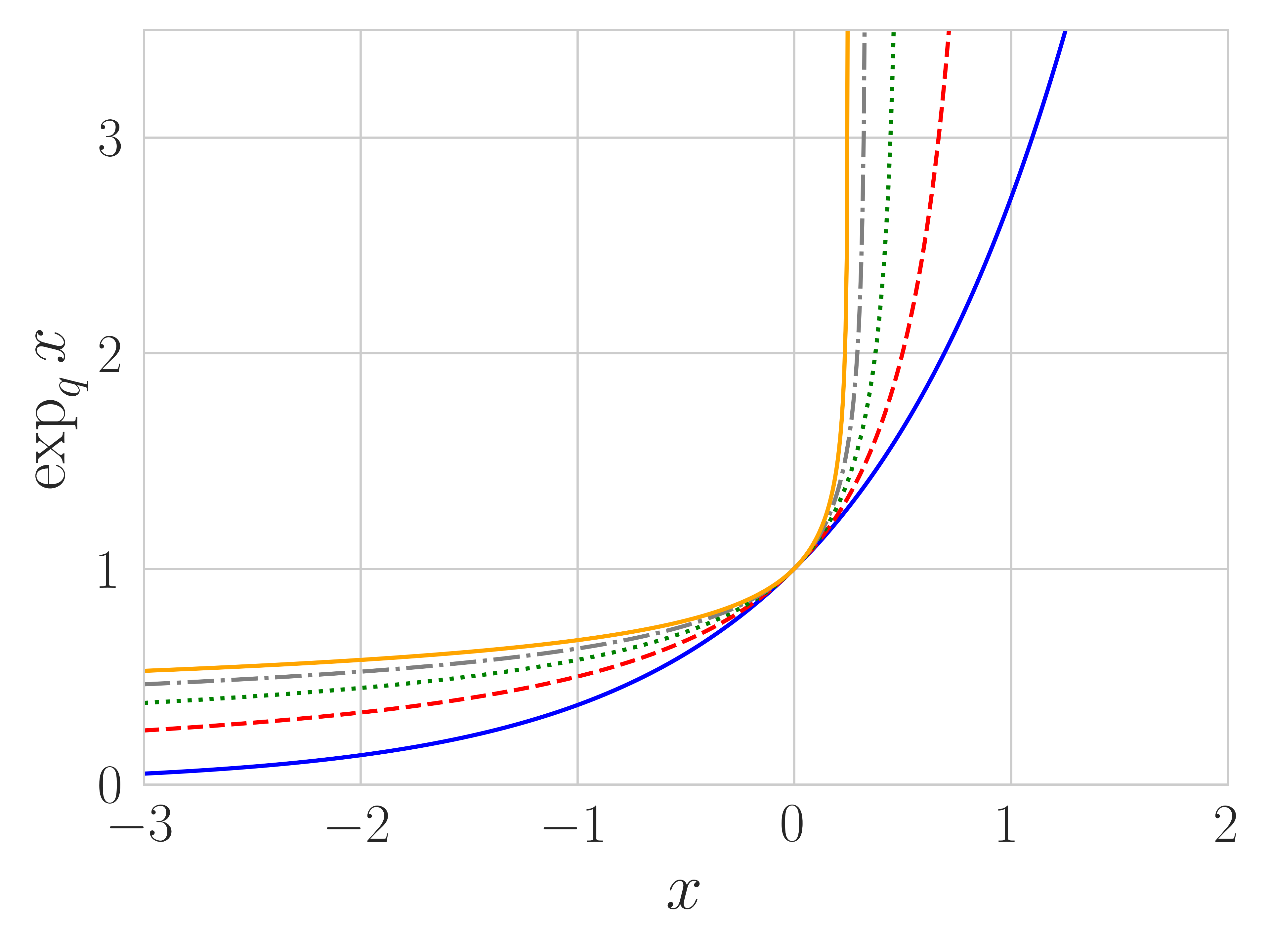}
  \end{minipage}\hfill
   \begin{minipage}[t]{0.33\textwidth}
 \includegraphics[width=0.99\linewidth]{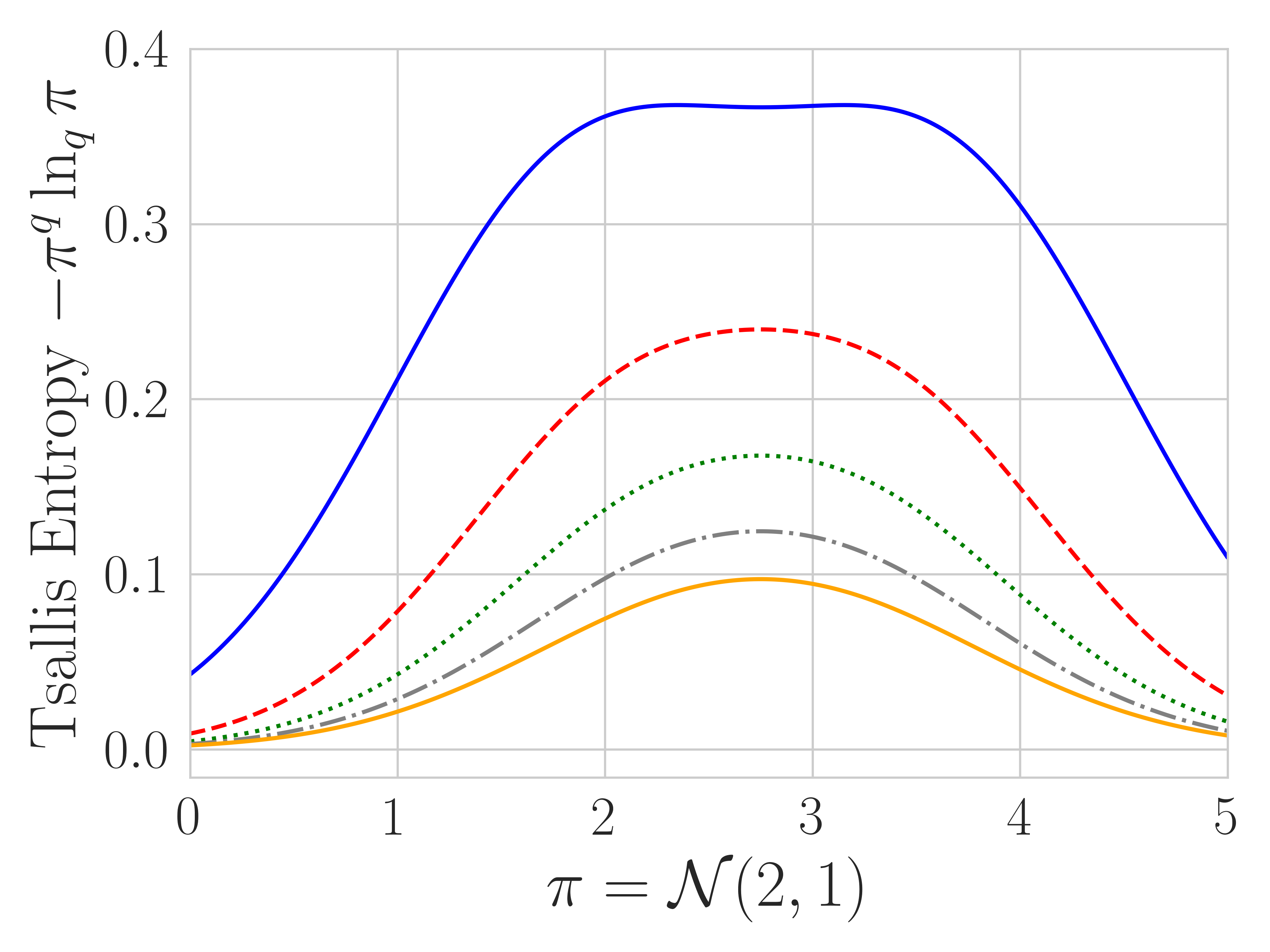}
  \end{minipage}
  \caption{
   $\ln_q x$, $\exp_q x$ and Tsallis entropy component $-\pi^q\ln_q\pi$ for $q=1$ to $5$. 
   When $q=1$ they respectively recover their standard counterpart. 
   $\pi$ is chosen to be Gaussian $\mathcal{N}(2,1)$.
    As $q$ gets larger $\ln_q x$ (and hence Tsallis entropy) becomes more flat and $\exp_q x$ more steep. 
  }
  \label{fig:qstats}
\end{figure}

\subsection{Tsallis Entropy Regularization}
Tsallis entropy was first proposed by \citet{TsallisEntropy} and is defined by the $q$-logarithm. 
The $q$-logarithm and its unique inverse function, the $q$-exponential, are defined as:
\begin{align}
  \logqstar x := \frac{x^{1-q} - 1}{1-q}, \quad \expqstar{\,x} := \AdaRectBracket{1 + (1 - q)x}^{\frac{1}{1-q}}_{+}, \quad \text{for } q\in \mathbb{R}\backslash \{1\}
  \label{eq:qlog_definition}
\end{align}
where $[\cdot]_{+} := \max\{\cdot, 0\}$.
We define $\ln_1 = \ln \, , \exp_1 = \exp$, as in the limit $q \rightarrow 1$, the formulas in \eq{\ref{eq:qlog_definition}} approach these functions.
Tsallis entropy can be defined by $\tsallis{\pi}:= p\AdaAngleProduct{-\pi^q}{\logqstar{\pi}}, p\in\mathbb{R}$ \citep{Suyari2005-LawErrorTsallis}.
We visualize the \qlog\!, $q$-exponential and Tsallis entropy for different $q$ in Figure \ref{fig:qstats}.
As $q$ gets larger, \qlog (and hence Tsallis entropy) becomes more flat and $q$-exponential more steep\footnote{
The $q$-logarithm defined here is consistent with the physics literature and different from prior RL works \citep{Lee2020-generalTsallisRSS}, where a change of variable $q^* = 2- q$ is made. We analyze both cases in Appendix \ref{apdx:basicfact_qlog}.
}. Note that $\expqstar$ is only invertible for $x > \frac{-1}{1-q}$. 

Tsallis policies have a similar form to softmax, but using the $q$-exponential instead. 
Let us provide some intuition for these policies.
When  $p=\frac{1}{2}, q=2$, $S_2(\pi) = \frac{1}{2}\AdaAngleProduct{\pi}{1-\pi}$, the optimization problem $\argmax_{\pi\in\Delta(\mathcal{A})} \AdaAngleProduct{\pi}{Q} + S_2(\pi) = \argmin_{\pi\in\Delta(\mathcal{A})}\MyNorm{\pi - Q}{2}^2$ is known to be the Euclidean projection onto the probability simplex.
Its solution $\AdaRectBracket{Q - \psi}_{+}$ is called the sparsemax \citep{Martins16-sparsemax,Lee2018-TsallisRAL} and has sparse support \citep{Duchi2008-projectL1Ball,Condat2016FastPO,Blondel-2020LearningFenchelYoundLoss}. $\psi:\mathcal{S}\times\mathcal{A} \rightarrow \mathcal{S}$ is the unique function satisfying $\AdaAngleProduct{\mathbf{1}}{\AdaRectBracket{Q-\psi}_{+}}=1$.

As our first result, we unify the Tsallis entropy regularized policies for all $q \in \mathbb{R}_{+}$ with the $q$-exponential, and show that $q$ and $\tau$ are interchangeable for controlling the truncation.
\begin{theorem}\label{thm:expq_policies}
  Let $\Omega(\pi) = -\tsallis{\pi}$ in \eq{\ref{eq:regularizedPI}}.
  Then the regularized optimal policies can be expressed:
  \begin{align}
    \begin{split}
    \pi(a|s) = \sqrt[\leftroot{-2}\uproot{16}1-q]{\AdaRectBracket{\frac{Q(s,a)}{\tau}  - \tilde{\psi}_{q}\AdaBracket{\frac{Q(s,\cdot)}{\tau}} }_{+}  (1-q) }  = \exp_{q}\AdaBracket{\frac{Q(s,a)}{\tau} - \psi_{q}\AdaBracket{ \frac{Q(s,\cdot)}{\tau}}}
    \end{split}
    \label{eq:sparse_normalization}
  \end{align}
  where $\psi_q = \tilde{\psi}_q + \frac{1}{1-q}$.
Additionally, for an arbitrary $(q, \tau)$ pair with $q > 1$, the same truncation effect (support) can be achieved using $(q=2, \frac{\tau}{q-1})$.
\end{theorem}
\begin{proof}
  See Appendix \ref{apdx:approximate_policy} for the full proof.
\end{proof}
Theorem \ref{thm:expq_policies} characterizes the role played by $q$: controlling the degree of truncation.
We show the truncation effect when $q=2$ and $q=50$ in Figure \ref{fig:qkl_examples}, confirming that Tsallis policies tend to truncate more as $q$ gets larger. The theorem also highlights that we can set $q=2$ and still get more or less truncation using different $\tau$, helping to explain why in our experiments $q=2$ is a generally effective choice.

\begin{figure}[t]
  \centering
\begin{minipage}[t]{0.33\textwidth}
\centering
\includegraphics[width=.99\linewidth]{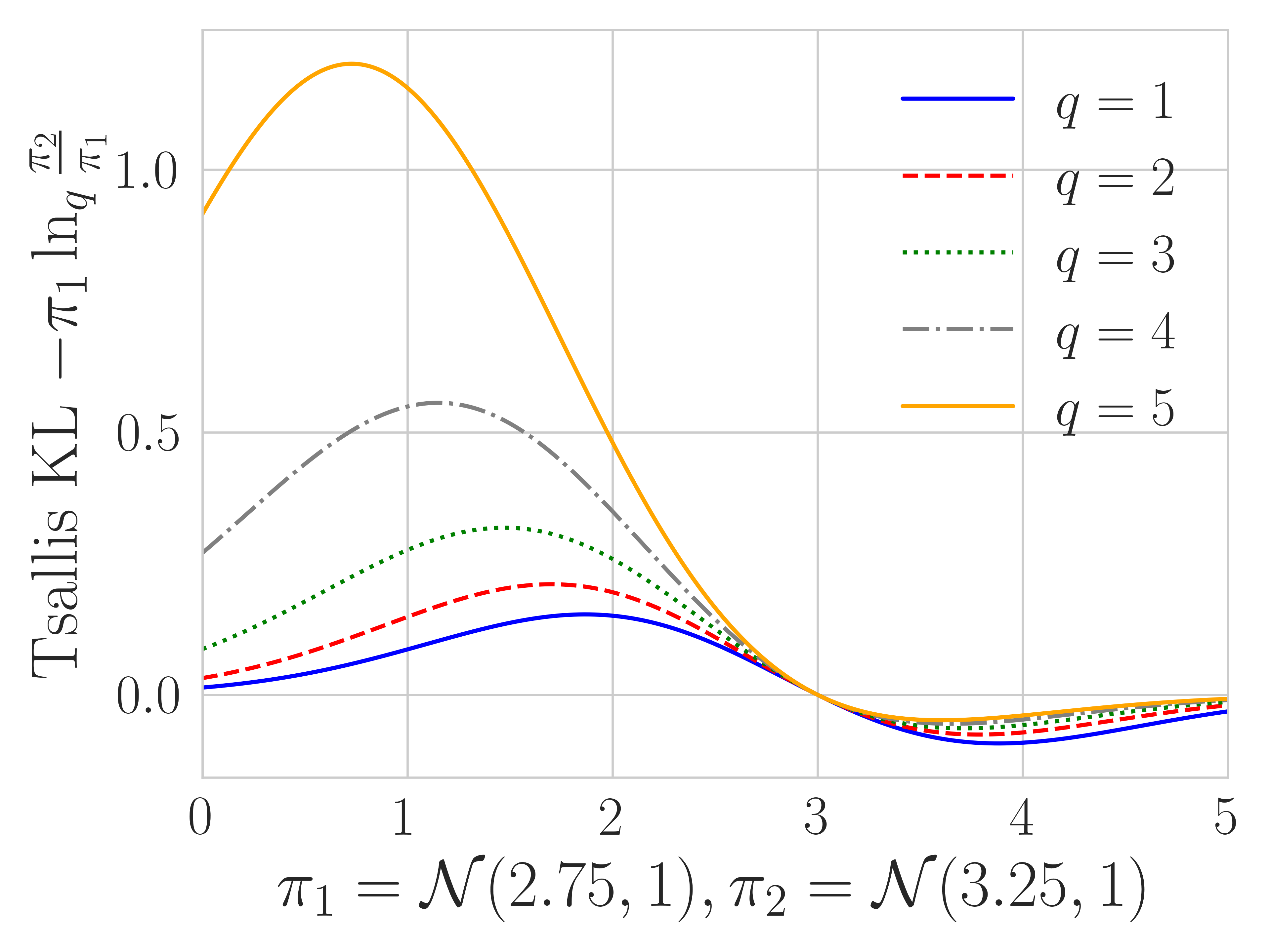}
  \end{minipage}\hfill
\begin{minipage}[t]{0.33\textwidth}
 \includegraphics[width=0.99\linewidth]{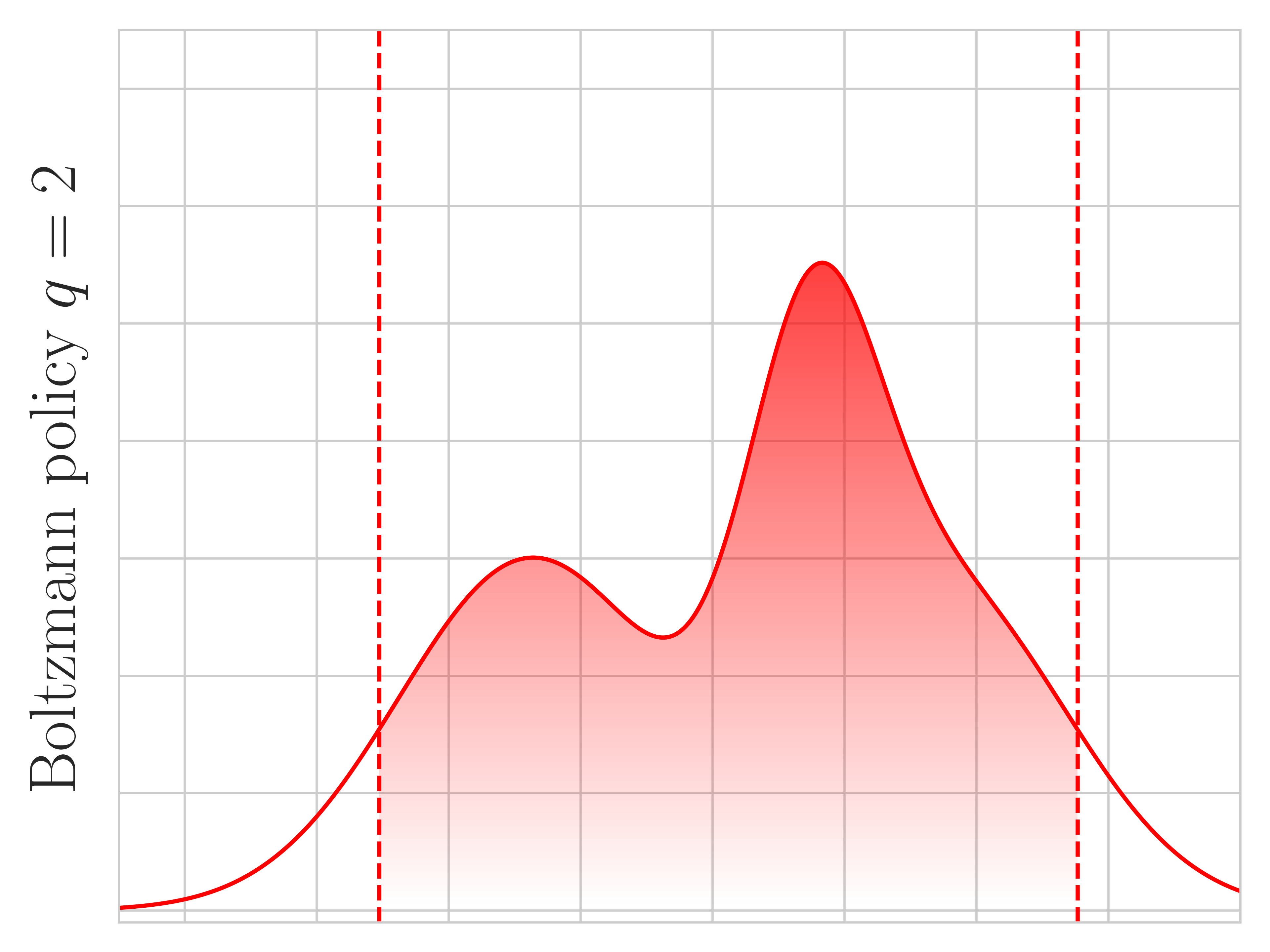}
  \end{minipage}\hfill
   \begin{minipage}[t]{0.33\textwidth}
 \includegraphics[width=0.99\linewidth]{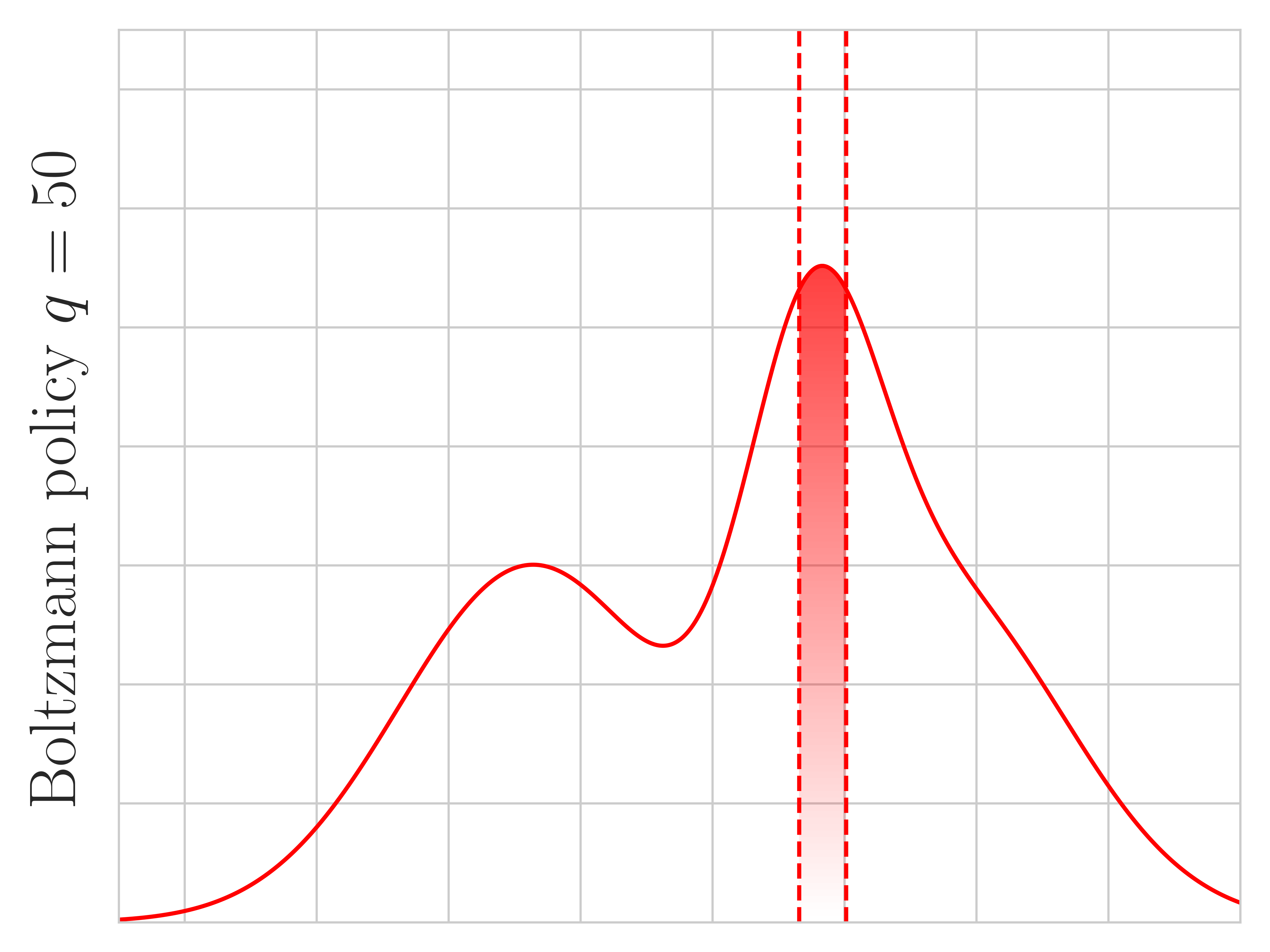}
  \end{minipage}
  \caption{(Left) 
  Tsallis KL component $-\pi_1\ln_q\frac{\pi_2}{\pi_1}$ between two Gaussian policies $\pi_1 = \mathcal{N}(2.75, 1), \pi_2=\mathcal{N}(3.25, 1)$ for $q=1$ to $5$.
  When $q=1$ TKL recovers KL.
  For $q>1$, TKL is more mode-covering than KL.
  (Mid) The sparsemax operator acting on a Boltzmann policy when $q=2$. 
  (Right) The sparsemax when $q=50$. Truncation gets stronger as $q$ gets larger. The same effect can be also controlled by $\tau$.
  }
  \label{fig:qkl_examples}
\end{figure}

Unfortunately, the threshold $\tilde{\psi}_q$ (and $\psi_q$) does not have a closed-form solution for $q\neq 1, 2, \infty$.
Note that $q = 1$ corresponds to Shannon entropy and $q = \infty$ to no regularization. 
However, we can resort to Taylor's expansion to obtain \emph{approximate sparsemax policies}.
\begin{theorem}\label{thm:approximate_psi}
    For $q\neq 1, \infty$, we can obtain approximate threshold $\hat{\psi}_q \approx \psi_q$ using Taylor's expansion, and therefore an approximate policy:
    \begin{align}
        \hat{\pi}(a|s) \propto \exp_q\AdaBracket{ \frac{Q(s,a)}{\tau}  - \hat{\psi}_{q}\AdaBracket{\frac{Q(s,\cdot)}{\tau}} }, \,\, \hat{\psi}_{q}\AdaBracket{\frac{Q(s,\cdot)}{\tau}} \doteq \frac{\sum_{a\in K(s)} \frac{Q(s,\cdot)}{\tau} - 1}{|K(s)|} + 1.
    \end{align}
    $K(s)$ is the set of highest-valued actions, satisfying the relation $1 + i\frac{Q(s,a_{(i)})}{\tau} > \sum_{j=1}^{i}\frac{Q(s,a_{(j)})}{\tau}$, where $a_{(j)}$ indicates the action with $j$th largest action value. 
  The sparsemax policy sets the probabilities of lowest-valued actions to zero: 
  $\pi(a_{(i)}|s)=0, i=z+1,\dots, |\mathcal{A}|$ where $Q(s,a_{(z)}) > \tau^{-1}\hat{\psi}_{q}\AdaBracket{\frac{Q(s,\cdot)}{\tau}} > Q(s,a_{(z+1)})$. 
    When $q=2$, $\hat{\psi}_q$ recovers $\psi_q$.
\end{theorem}
\begin{proof}
    See Appendix \ref{apdx:approximate_policy} for the full proof.
\end{proof}
\citet{Lee2020-generalTsallisRSS} also used $\exp_{q}$ to represent policies but they consider the continuous action setting and do not give any computable threshold.
 By contrast, Theorem \ref{thm:approximate_psi} presents an easily computable $\hat{\psi}_q$ for all $q \notin \{1,\infty\}$.

\subsection{Tsallis KL Regularization and Convergence Results}

The Tsallis KL divergence is defined as
$\qKLany{\pi}{\mu} := \AdaAngleProduct{\pi}{-\ln_q\frac{\mu}{\pi}}$ \citep{Furuichi2004-fundamentals-qKL}.
It is a member of $f$-divergence and can be recovered by choosing $f(t) = -\ln_q t$.
As a divergence penalty, it is required that $q>0$ since $f(t)$ should be convex.
We further assume that $q>1$ to align with standard divergences; i.e. penalize large value of $\frac{\pi}{\mu}$, since for $0<q<1$ the regularization would penalize $\frac{\mu}{\pi}$ instead.
In practice, we find that $0<q<1$ tend to perform poorly.
In contrast to KL, Tsallis KL is more \emph{mass-covering}; i.e. its value is proportional to the $q$-th power of the ratio $\frac{\pi}{\mu}$. 
When $q$ is big, large values of $\frac{\pi}{\mu}$ are strongly penalized \citep{Wang2018-tailAdaptivefDiv}.
This behavior of Tsallis KL divergence can also be found in other well-known divergences: the $\alpha$-divergence \citep{Wang2018-tailAdaptivefDiv,Belousov2019-AlphaDivergence} coincides with Tsallis KL when $\alpha=2$;
R\'{e}nyi's divergence also penalizes large policy ratio by raising it to the power $q$, but inside the logarithm, which is therefore an additive extension of KL \citep{Li2016-RenyiDivVarInfer}.
In the limit of $q\rightarrow 1$,  Tsallis entropy recovers Shannon entropy and the Tsallis KL divergence recovers the KL divergence.
We plot the Tsallis KL divergence behavior in Figure \ref{fig:qkl_examples}.

Now let us turn to formalizing 
when value iteration under Tsallis regularization converges.
The \qlog has the following properties:
\emph{Convexity: } $\logqstar{\pi}$ is convex for $q\leq 0$,  concave for $q > 0$. When $q=0$, both $\logqstar\,\, ,\expqstar$ become linear.
\emph{Monotonicity: } $\ln_q\pi$ is monotonically increasing with respect to $\pi$.
These two properties can be simply verified by checking the first and second order derivative.
We prove in Appendix \ref{apdx:basicfact_qlog} the following similarity between Shannon entropy (reps. KL) and Tsallis entropy (resp. Tsallis KL).
\emph{Bounded entropy}:  we have $ 0 \leq \entropy \leq \ln |\mathcal{A}|$; and $\forall q, \, 0 \leq \tsallis{\pi} \leq \ln_q {|\mathcal{A}|}$. 
\emph{Generalized KL property}: $\forall q$, $\qKLany{\pi}{\mu} \geq 0$. $\qKLany{\pi}{\mu} = 0$ if and only if $\pi=\mu$ almost everywhere, and $\qKLany{\pi}{\mu} \rightarrow \infty$ whenever $\pi(a|s)>0$ and $\mu(a|s)=0$.

However, despite their similarity, a crucial difference is that $\ln_q$ is non-extensive, which means it is not additive \citep{TsallisEntropy}. 
In fact, $\ln_q$ is only \emph{pseudo-additive}:
 \begin{align}
  \ln_q{\pi\mu} = \ln_q\pi + \ln_q \mu + (1-q)\ln_q\pi\ln_q\mu.
  \label{eq:pseudo_add}
\end{align}
Pseudo-additivity complicates obtaining convergence results for \eq{\ref{eq:regularizedPI}} with \qlog regularizers, since the techniques used for Shannon entropy and KL divergence are generally not applicable to their $\ln_q$ counterparts. 
Moreover, deriving the optimal policy may be nontrivial.
Convergence results have only been established for Tsallis entropy \citep{Lee2018-TsallisRAL,Nachum18a-tsallis}.

We know that \eq{\ref{eq:regularizedPI}} with $\Omega(\pi) = \qKLany{\pi}{\mu}$, for any $\mu$, converges for $q$ that make $\qKLany{\pi}{\mu}$ strictly convex \citep{geist19-regularized}.
When $q=2$, it is strongly convex, and so also strictly convex, guaranteeing convergence.
\begin{theorem}\label{thm:converge}
  The regularized recursion \eq{\ref{eq:regularizedPI}} with $\Omega(\pi) = \qKLany{\pi}{\cdot}$ when $q=2$ converges to the unique regularized optimal policy.
\end{theorem}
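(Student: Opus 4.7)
The plan is to reduce the claim to the general convergence guarantee for regularized value iteration of \cite{geist19-regularized}, which ensures that (\ref{eq:regularizedPI}) converges whenever the regularizer $\Omega$ is strongly convex in $\pi$. Concretely, I only need to verify (i) that the Tsallis KL with $q = 2$ is strongly convex in its first argument for any fixed second argument, and (ii) that the resulting regularized Bellman operator $T_{\pi,\Omega}$ is a $\gamma$-contraction in the sup-norm.

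For step (i), I specialize (\ref{eq:qlog_definition}) to $q = 2$, where $\ln_2 x = x - 1$, so that the Tsallis KL collapses to a (shifted) Pearson $\chi^2$ divergence,
\begin{equation*}
\qKLany{\pi}{\mu} \;=\; \AdaAngleProduct{\pi}{\tfrac{\pi}{\mu} - 1} \;=\; \sum_{a} \tfrac{\pi(a)^2}{\mu(a)} \;-\; 1.
\end{equation*}
Its Hessian with respect to $\pi$ is diagonal with entries $2/\mu(a) \ge 2$ on the simplex, so $\Omega$ is at least $2$-strongly convex in $\pi$. In particular, the regularized greedy step $\greedyOmega{Q_k}$ admits a unique maximizer for every $Q_k$, recovering the closed-form $q = 2$ sparsemax from (\ref{eq:sparse_normalization}).

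For step (ii), I use the fact that $\Omega(\pi)$ does not depend on $Q$, so for any action-value functions $Q_1,Q_2$ and any fixed $\pi$,
\begin{equation*}
T_{\pi,\Omega}Q_1 - T_{\pi,\Omega}Q_2 \;=\; \gamma P\AdaAngleProduct{\pi}{Q_1-Q_2},
\end{equation*}
and since $\pi$ is a probability distribution, $\|T_{\pi,\Omega}Q_1 - T_{\pi,\Omega}Q_2\|_\infty \le \gamma\|Q_1-Q_2\|_\infty$. Combined with uniqueness of the greedy step, a Banach fixed-point argument (as in \cite{geist19-regularized}) yields a unique regularized optimal $Q$ and an associated unique regularized optimal policy.

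The main obstacle I anticipate is conceptual rather than computational: the strong-convexity modulus $2/\mu(a)$ degenerates as $\mu(a)\to 0$, and $\qKLany{\pi}{\mu}$ is $+\infty$ whenever $\pi(a) > 0 = \mu(a)$. The argument therefore only applies on the subset $\{\pi : \mathrm{supp}(\pi) \subseteq \mathrm{supp}(\mu)\}$, which is natural because the $q = 2$ regularized policies are sparsemax and can zero out actions; handling this carefully is the fiddliest part of the write-up. This is also why the statement restricts to $q = 2$: for general $q$ the Hessian of $\qKLany{\pi}{\mu}$ picks up a factor $\pi(a)^{q-2}$, and the pseudo-additivity (\ref{eq:pseudo_add}) of $\ln_q$ obstructs the clean quadratic form, so a uniform strong-convexity modulus requires a more delicate argument outside $q = 2$.
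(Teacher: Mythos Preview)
Your proposal is correct and matches the paper's approach: both establish that $D^{q=2}_{KL}(\pi\|\mu)=\sum_a \pi(a)^2/\mu(a)-1$ is strongly convex in $\pi$ and then invoke the regularized-MDP framework of \cite{geist19-regularized} to obtain the $\gamma$-contraction and uniqueness conclusions. One tangential slip worth fixing in the write-up: the greedy step under Tsallis \emph{KL} regularization does not literally recover (\ref{eq:sparse_normalization}), which is the Tsallis-\emph{entropy} sparsemax; the KL-regularized maximizer carries the baseline $\mu$ as a multiplicative factor (cf.\ Appendix~\ref{apdx:tsallis_kl_policy}), though this does not affect the convergence argument itself.
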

\vspace{-0.3cm}
\begin{proof}
 See Appendix \ref{apdx:proof_qkl2}. 
 It simply involves proving that this regularizer is strongly convex.
\end{proof}

\subsection{TKL Regularized Policies Do More Than Averaging}

We next show that the optimal regularized policy under Tsallis KL regularization does more than uniform averaging. It can be seen as performing a weighted average where the degree of weighting is controlled by $q$. Consider the recursion
\begin{align}
  \begin{cases}
      \pi_{k+1} = \greedyTKL{\TsallisQ{k}}, & \\
      \TsallisQ{k+1} =   \BellmanTKL{k+1}{k}, &
  \end{cases}
  \label{eq:TsallisKLPI}
\end{align}
where we dropped the regularization coefficient $\tau$ for convenience.

\begin{theorem}\label{prop1}
The greedy policy $\pi_{k+1}$ in Equation \eqref{eq:TsallisKLPI} satisfies
\begin{align}
    &\pi_{k+1} \propto \AdaBracket{ \expqstar{Q_1}\cdots\expqstar{Q_k} } = \AdaRectBracket{ \expqstar\!{\AdaBracket{ \sum_{j=1}^{k}Q_j} }^{q-1} \!\!\!\!\!\!\!\!\!
    + \sum_{j=2}^{k}(q-1)^j \!\!\!\!\!\!\!\sum_{i_{1}=1 < \dots < i_j}^k \!\!\!\! Q_{i_1} \cdots Q_{i_j}}^{\frac{1}{q-1}}.
\label{eq:pseudo_average}
\end{align}
When $q \!=\! 1$, \eq{\ref{eq:TsallisKLPI}} reduces to KL regularized recursion and hence \eq{\ref{eq:pseudo_average}} reduces to the KL-regularized policy. 
When $q \!=\! 2$, \eq{\ref{eq:pseudo_average}} becomes:
\begin{align*}
  \exp_{2}{Q_1}\cdots\exp_{2}{Q_k}  \!  =
    \exp_{2}{\AdaBracket{ \sum_{j=1}^{k}Q_j \! } \!} + \!\!\!\!\!\!\! \sum_{\substack{j=2 \\ i_{1}=1 < \dots < i_j}}^{k} \!\!\!\!\!\!\! Q_{i_1} \cdots Q_{i_j}.
\end{align*}
i.e., Tsallis KL regularized policies average over the history of value estimates as well as computing the interaction between them $\sum_{j=2}^{k}\sum_{\substack{i_{1} < \dots < i_j}}^{k} Q_{i_1} \dots Q_{i_j}$.
\end{theorem}
\begin{proof}
  See Appendix \ref{apdx:tsallis_kl_policy} for the full proof.
    The proof comprises two parts: the first part shows $\pi_{k+1} \propto \expqstar{Q_1}\dots\expqstar{Q_k}$, and the second part establishes the \emph{more-than-averaging} property by two-point equation \citep{Yamano2004-properties-qlogexp} and the $2-q$ duality \citep{Naudts2002DeformedLogarithm,Suyari2005-LawErrorTsallis} to conclude $\AdaBracket{\expqstar{x}\cdot \expqstar{y}}^{q-1} \!\!\! = \expqstar{\AdaBracket{x+y}}^{q-1} \!\!+ (q-1)^2 xy $.
\end{proof}

The form of this policy is harder to intuit, but we can try to understand each component. 
The first component actually corresponds to a weighted averaging by the property of the $\expqstar{\,}$:
\begin{align}
  \expqstar{\AdaBracket{\sum_{i=1}^{k}Q_i}\!\!} = \expqstar{\, Q_1}\expqstar{\, \AdaBracket{\frac{Q_2}{1+(1-q)Q_1}}}\dots \expqstar{\AdaBracket{\!\frac{Q_k}{1 + (1-q)\sum_{i=1}^{k-1} Q_i}\! }}.
  \label{eq:weighted_average}
\end{align}
\eq{\ref{eq:weighted_average}} is a possible way to expand the summation: the left-hand side of the equation is what one might expect from conventional KL regularization; while the right-hand side shows a weighted scheme such that 
any estimate $Q_j$ is weighted by the summation of estimates before $Q_j$ times $1-q$ (Note that we can exchange 1 and $q$, see Appendix \ref{apdx:basicfact_qlog}).
Weighting down numerator by the sum of components in the demoninator has been analyzed before in the literature of weighted average by robust divergences, e.g., the $\gamma$-divergence \citep[Table 1]{Futami2018-robustDivergence}. 
Therefore, we conjecture this functional form helps weighting down the magnitude of excessively large $Q_k$,
which can also be controlled by choosing $q$.
In fact, obtaining a weighted average has been an important topic in RL, where many proposed heuristics coincide with weighted averaging \citep{grau-moya2018soft,haarnoja-SAC2018,Kitamura2021-GVI}.

Now let us consider the second term with $q=2$, therefore the leading $(q-1)^j$ vanishes.
The action-value cross-product term can be intuitively understood as further increasing the probability for any actions that have had consistently larger values across iterations. This observation agrees with the mode-covering property of Tsallis KL.
However, there is no concrete evidence yet how the average inside 
$q$-exponential and the cross-product action values may work jointly to benefit the policy, and their benefits may depend on the task and environments, requiring further categorization and discussion.
Empirically, we find that the nonlinearity  of Tsallis KL policies bring superior performance to the uniform averaging KL policies on the testbed considered.

\section{A Practical Algorithm for Tsallis KL Regularization}\label{sec:Munchausen}

In this section we provide a practical algorithm for implementing Tsallis regularization. 
We first explain why this is not straightforward to simply implement KL-regularized value iteration, and how Munchausen Value Iteration (MVI) overcomes this issue with a clever implicit regularization trick. We then extend this algorithm to $q > 1$ using a similar approach, though now with some approximation due once again to the difficulties of pseudo-additivity.

\subsection{Implicit Regularization With MVI}
 
 Even for the standard KL, it is difficult to implement KL-regularized value iteration with function approximation.
The difficulty arises from the fact that we cannot exactly obtain $\pi_{k+1} \propto \pi_k\exp\AdaBracket{Q_k}$. This policy might not be representable by our function approximator. 
For $q=1$, one needs to store all past $Q_k$ which is computationally infeasible. 

An alternative direction has been to construct a different value function iteration scheme, which is equivalent to the  original KL regularized value iteration \citep{azar2012dynamic,kozunoCVI}.
A recent method of this family is Munchausen VI (MVI) \citep{vieillard2020munchausen}.
MVI implicitly enforces KL regularization using the recursion
\begin{align}
    \begin{split}
        \label{eq:mdqn_recursion}
        \begin{cases}
           \! \pi_{k+1} = \argmax_{\pi} \AdaAngleProduct{\pi}{Q_{k} - \tau\ln\pi} \\
           \! Q_{k+1} = r + {\color{red}{\alpha\tau \ln\pi_{k+1} }} + \gamma P  \AdaAngleProduct{\pi_{k+1}}{Q_k \!-\! {\color{black}{\color{blue}\tau\ln\pi_{k+1}}} }
        \end{cases} 
    \end{split}
\end{align}
We see that \eq{\ref{eq:mdqn_recursion}} is \eq{\ref{eq:regularizedPI}} with $\Omega(\pi)=-\entropy$ ({\color{blue}blue}) plus an additional {\color{red} red} \emph{Munchausen term}, with coefficient $\alpha$.
\citet{vieillard2020munchausen} showed that implicit KL regularization was performed under the hood, even though we still have tractable $\pi_{k+1}\propto\exp\AdaBracket{\tau^{-1}Q_k}$:
\begin{align}
    &Q_{k+1} = r + {\color{black} \alpha\tau \ln{\pi_{k+1}} }  + \gamma P \AdaAngleProduct{\pi_{k+1}}{Q_k - {\color{black} \tau\ln{\pi_{k+1}} } }\Leftrightarrow Q_{k+1} \!-\! \alpha\tau\ln{\pi_{k+1}} \!=\!\nonumber\\
    & r + \gamma P \big(\AdaAngleProduct{\pi_{k+1}}{Q_k - \alpha\tau\ln{\pi_k}}  \!-\! \AdaAngleProduct{\pi_{k+1}}{ \alpha\tau(\ln{\pi_{k+1}} \!-\!  \ln{\pi_{k}}) - (1-\alpha)\tau\ln{\pi_{k+1}} } \!\big) \nonumber\\
    & \Leftrightarrow Q'_{k+1} =  r + \gamma P\big( \AdaAngleProduct{\pi_{k+1}}{Q'_k} - \alpha\tau\KLindex{k+1}{k}  + (1-\alpha)\tau \entropyIndex{k+1} \big)
  \label{eq:mdqn}
\end{align}
where $Q'_{k+1}\!:=\! Q_{k+1} - \alpha\tau\ln{\pi_{k+1}}$ is the generalized action value function.

The implementation of this idea uses the fact that $\alpha\tau\ln\pi_{k+1} = \alpha(Q_k - \mathcal{M}_{\tau}Q_k)$, where $\mathcal{M}_{\tau}Q_k:= \frac{1}{Z_k}\AdaAngleProduct{\exp\AdaBracket{\tau^{-1}Q_k}}{Q_k},  Z_k = \AdaAngleProduct{\boldsymbol{1}}{\exp\AdaBracket{\tau^{-1}Q_k}}$ is the Boltzmann softmax operator.\footnotemark\footnotetext{Using $\mathcal{M}_{\tau}Q$ is equivalent to the log-sum-exp operator up to a constant shift \citep{azar2012dynamic}.} In the original work, computing this advantage term was found to be more stable than directly using the log of the policy. In our extension, we use the same form.

\subsection{MVI($q$) For General $q$}

The MVI(q) algorithm is a simple extension of MVI: it replaces the standard exponential in the definition of the advantage with the $q$-exponential. We can express this action gap
as $Q_k - \mathcal{M}_{q,\tau}{Q_k}$, where $\mathcal{M}_{q,\tau}{Q_k} = \AdaAngleProduct{\exp_q\AdaBracket{\frac{Q_k}{\tau} - \psi_q\AdaBracket{ \frac{Q_k}{\tau}}}}{ Q_k}$.
When $q=1$, it recovers $Q_k - \mathcal{M}_{\tau}Q_k$. 
We summarize this MVI($q$) algorithm in Algorithm \ref{apdx:approximate_policy} in the Appendix.
When $q=1$, we recover MVI.
For $q = \infty$, we get that $\mathcal{M}_{\infty,\tau}{Q_k}$ is $\max_{a}Q_k(s,a)$---no regularization---and we recover advantage learning \citep{Baird1999-gradient-actionGap}.
Similar to the original MVI algorithm, MVI($q$) enjoys tractable policy expression with $\pi_{k+1}\propto\exp_q\AdaBracket{\tau^{-1} Q_k}$.

Unlike MVI, however, MVI($q$) no longer exactly implements the implicit regularization shown in \eq{\ref{eq:mdqn}}. Below, we go through a similar derivation as MVI, show why there is an approximation and motivate why the above advantage term is a reasonable approximation. In addition to this reasoning, our primary motivation for this extension of MVI to use $q > 1$ was to inherit the same simple form as MVI as well as because empirically we found it to be effective.


\begin{figure}[t]
    \centering
    \includegraphics[trim=10 10 10 5,clip,width=0.9\textwidth]{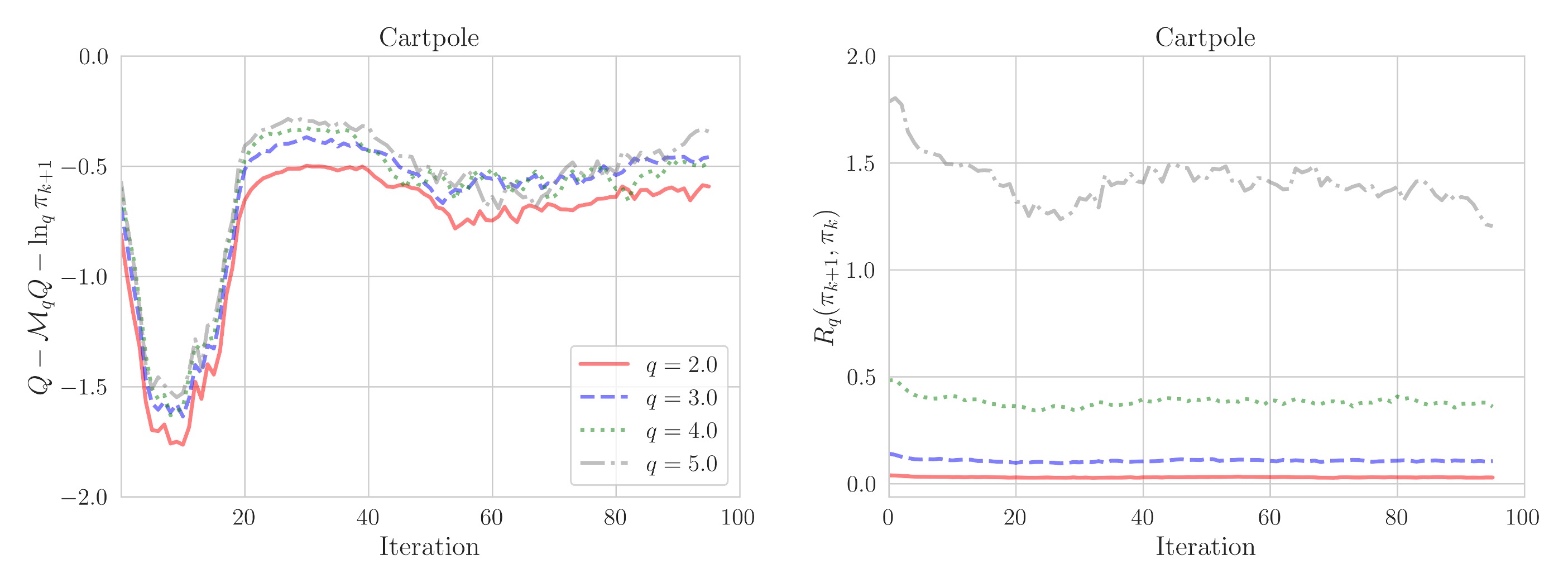}
    \caption{MVI$(q)$ on CartPole-v1 for $q=2, 3, 4, 5$, averaged over 50 seeds, with $\tau = 0.03, \alpha =0.9$.
    (Left) The difference between the proposed action gap $Q_k - \mathcal{M}_{q,\tau}{Q_k}$ and the general Munchausen term $\ln_q \pi_{k+1}$ converges to a constant.
    (Right) The residual $R_q(\pi_{k+1}, \pi_k)$ becomes larger as $q$ increases.
    For $q=2$, it remains negligible throughout the learning.
    }
    \label{fig:cartpole}
\end{figure}

Let us similarly define a generalized action value function $Q_{k+1}' = Q_{k+1} - \alpha\tau \logqstar{{\pi_{k+1}}}$. Using the relationship $\logqstar{{\pi_{k}}} = \logqstar{\frac{\pi_{k}}{\pi_{k+1}}}  - \logqstar{\frac{1}{\pi_{k+1}}} - (1-q)\logqstar{\pi_{k}}\logqstar{\frac{1}{\pi_{k+1}}}$, we get
\begin{align}
    &Q_{k+1} - {\color{black} \alpha\tau\logqstar{{\pi_{k+1}}}} =   r + \gamma P \AdaAngleProduct{\pi_{k+1}}{Q_k + \alpha\tau\logqstar{{\pi_{k}}}  - \alpha\tau\logqstar{{\pi_{k}}}  + \tau S_q\AdaBracket{\pi_{k+1}}} \nonumber\\
    &\Leftrightarrow Q_{k+1}' = r + \gamma P\AdaAngleProduct{\pi_{k+1}}{Q_k' + \tau S_q \AdaBracket{\pi_{k+1}} } +  \nonumber\\
    &\qquad \qquad  \gamma P\AdaAngleProduct{\pi_{k+1}}{ \alpha\tau \AdaBracket{ \logqstar{\frac{\pi_{k}}{\pi_{k+1}}} - \logqstar{\frac{1}{\pi_{k+1}}} - (1-q)\logqstar{\frac{1}{\pi_{k+1}}} \logqstar{\pi_k} }  } \label{eq:tklvi_munchausen} \\
    & = r + \gamma P\AdaAngleProduct{\pi_{k+1}}{Q_k' + (1-\alpha)\tau S_q(\pi_{k+1}) } - \gamma P \AdaAngleProduct{\pi_{k+1}}{\alpha\tau\qKLindex{k+1}{k} - \alpha\tau R_q(\pi_{k+1}, \pi_k)}
    \nonumber
\end{align}
where we leveraged the fact that $-\alpha\tau\AdaAngleProduct{\pi_{k+1}}{\logqstar{\frac{1}{\pi_{k+1}}}} =  -\alpha\tau S_q(\pi_{k+1})$ and defined the residual term $R_{q}(\pi_{k+1}, \pi_k) := (1-q)\logqstar{\frac{1}{\pi_{k+1}}} \logqstar{\pi_k}$.
When $q=2$, it is expected that the residual term remains negligible, but can become larger as $q$ increases.
We visualize the trend of the residual $R_{q}(\pi_{k+1}, \pi_k)$ for $q=2,3,4,5$ on the \texttt{CartPole-v1} environment \citep{brockman2016openai} in Figure \ref{fig:cartpole}.
Learning consists of $2.5\times 10^5$ steps, evaluated every $2500$ steps (one iteration), averaged over 50 independent runs.
It is visible that the magnitude of residual jumps from $q=4$ to $5$, while $q=2$ remains negligible throughout.

A reasonable approximation, therefore, is to use $\ln_q \pi_{k+1}$ and omit this residual term. Even this approximation, however, has an issue. When the actions are in the support,
$\logqstar\,$ is the unique inverse function of $\exp_q$ and $\ln_q \pi_{k+1}$ yields $\frac{Q_k}{\tau} - \psi_q\AdaBracket{\frac{Q_k}{\tau}}$. However, for actions outside the support, we cannot get the inverse, because many inputs to $\exp_q$ can result in zero. We could still use $\frac{Q_k}{\tau} - \psi_q\AdaBracket{\frac{Q_k}{\tau}}$ as a sensible choice, and it appropriately does use negative values for the Munchausen term for these zero-probability actions. Empirically, however, we found this to be less effective than using the action gap.  

Though the action gap is yet another approximation, 
there are clear similarities between using $\frac{Q_k}{\tau} - \psi_q\AdaBracket{\frac{Q_k}{\tau}}$ and the action gap $Q_k - \mathcal{M}_{q,\tau}{Q_k}$. The primary difference is in how the values are centered. We can see $\psi_q$ as using a uniform average value of the actions in the support, as characterized in Theorem \ref{thm:approximate_psi}. $\mathcal{M}_{q,\tau}{Q_k}$, on the other hand, is a weighted average of action-values. 

We plot the difference between $Q_k - \mathcal{M}_{q,\tau}{Q_k}$ and $\ln_q\pi_{k+1}$ in Figure \ref{fig:cartpole}, again in Cartpole. The difference stabilizes around -0.5 for most of learning---in other words primarily just shifting by a constant---but in early learning  $\ln_q\pi_{k+1}$ is larger, across all $q$. This difference in magnitude might explain why using the action gap results in more stable learning, though more investigation is needed to truly understand the difference. For the purposes of this initial work, we pursue the use of the action gap, both as itself a natural extension of the current implementation of MVI and from our own experiments suggesting improved stability with this form.

\begin{figure*}[t]
    \includegraphics[trim=5 5 5 5,clip,width=0.99\textwidth]{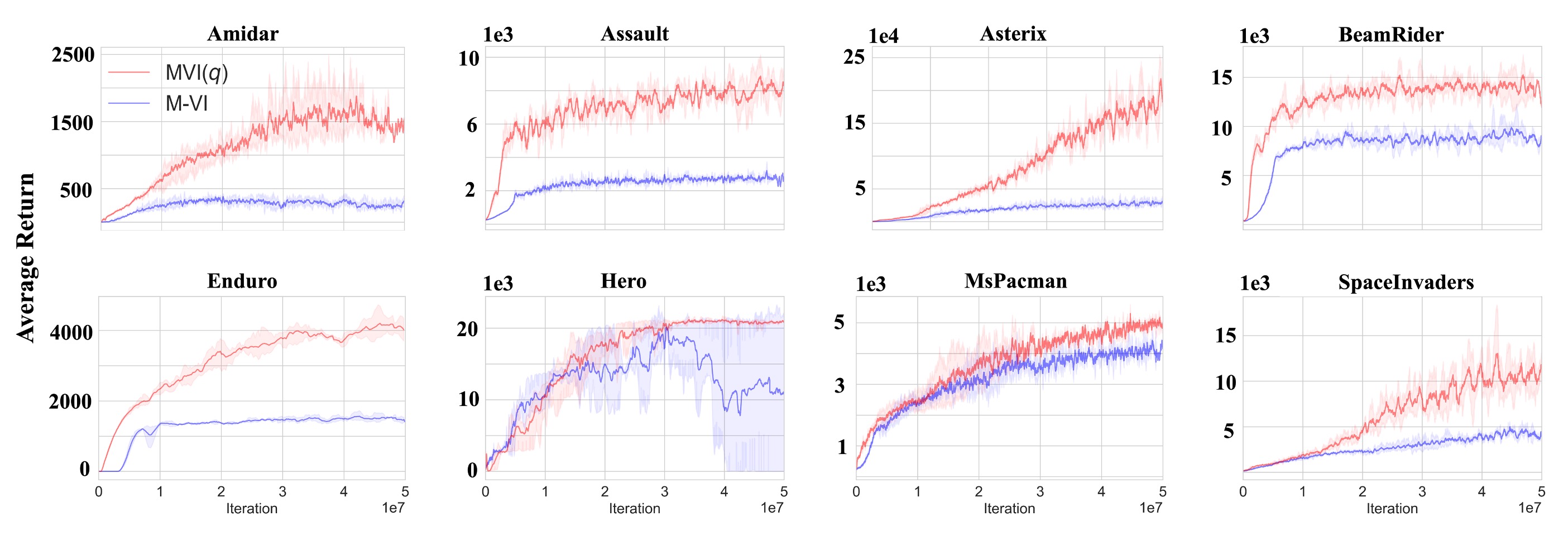}
    \caption{Learning curves of  MVI($q$) and M-VI on the selected Atari games, averaged over 3 independent runs,
    with ribbon denoting the standard error. 
    On some environments  MVI($q$) significantly improve upon M-VI.
    Quantitative improvements over M-VI and Tsallis-VI are shown in Figures \ref{fig:evi_mdqn}.
    }
    \label{fig:learning_curves_atari}
  \end{figure*}

  \begin{figure}[t] 
  \label{fig:} 
  \begin{minipage}[t]{0.475\linewidth}
    \centering
    \includegraphics[trim=10 10 10 5,clip,width=.99\linewidth]{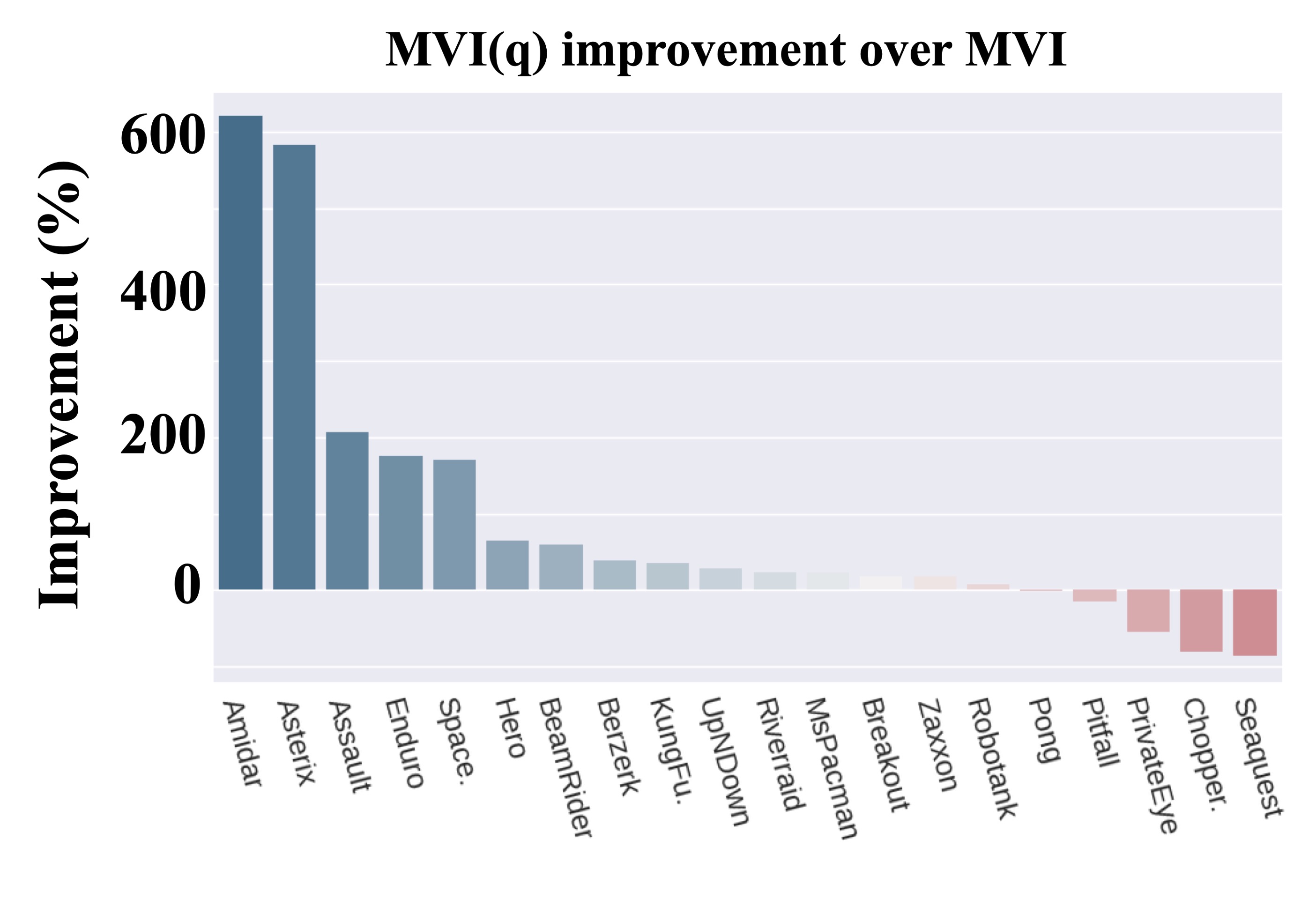} 

  \end{minipage}\hfill
  \begin{minipage}[t]{0.495\linewidth}
    \centering
    \includegraphics[trim=15 10 10 5,clip,width=\linewidth]{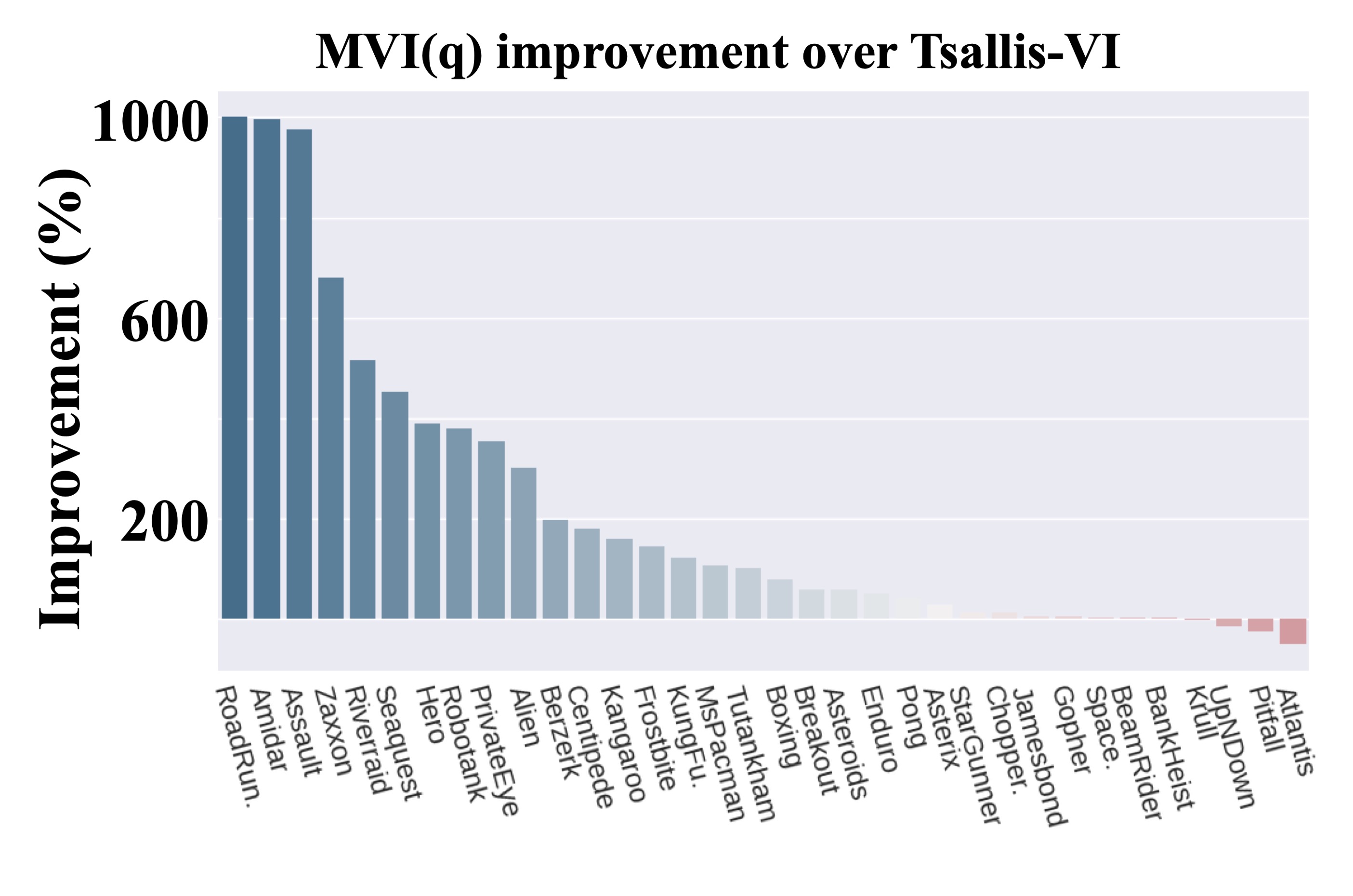} 
  \end{minipage} 
      \caption{(Left) The percent improvement of MVI($q$) with $q = 2$ over standard MVI (where $q=1$) on select Atari games. The improvement is computed by subtracting the scores from MVI($q$) and MVI and normalizing by the MVI scores.
      (Right) Improvement over Tsallis-VI on Atari environments, normalized with Tsallis-VI scores.} 
      \label{fig:evi_mdqn}
  \end{figure}
  
  
  
  \section{Experiments}\label{sec:experiment}
  
  In this section we investigate the utility of MVI($q$) in the Atari 2600 benchmark \citep{bellemare13-arcade-jair}. 
  We test whether this result holds in more challenging environments. Specifically, we compare to standard MVI ($q = 1$), which was already shown to have competitive performance on Atari \citep{vieillard2020munchausen}. We restrict our attention to $q=2$, which was generally effective in other settings and also allows us to contrast to previous work \citep{Lee2020-generalTsallisRSS} that only used entropy regularization with KL regularization. For MVI($q = 2$), we take the exact same learning setup---hyperparameters and architecture---as MVI($q=1$) and simply modify the term added to the VI update, as in Algorithm \ref{alg:tklvi_general}.

  For the Atari games we implemented  MVI($q$), Tsallis-VI and M-VI based on the Quantile Regression DQN \citep{Dabney2018-QRDQN}.
    We leverage the optimized Stable-Baselines3 architecture \citep{stable-baselines3} for best performance and average over 3 independent runs following \citep{vieillard2020munchausen}, though we run $50$ million frames instead of 200 million.
    From Figure \ref{fig:learning_curves_atari} it is visible that MVI$(q)$ is stable with no wild variance shown, suggesting 3 seeds might be sufficient.
    We perform grid searches for the algorithmic hyperparameters on two environments \texttt{Asterix} and \texttt{Seaquest}: the latter environment is regarded as a hard exploration environment.
    MVI$(q)$ $\alpha: \{0.01, 0.1, 0.5, 0.9, 0.99\}$; 
    $\tau: \{0.01, 0.1, 1.0, 10, 100\}$.
    Tsallis-VI $\tau: \{0.01, 0.1, 1.0, 10, 100\}$.
    For MVI we use the reported hyperparameters in \citep{vieillard2020munchausen}.
  Hyperparameters can be seen from Table \ref{tb:atari} and full results are provided in Appendix \ref{apdx:gym}.
  
  
  \subsection{Comparing  MVI($q$) with $q=1$ to $q=2$}
  
  We provide the overall performance of MVI versus MVI($q=2$) in Figure \ref{fig:evi_mdqn}. Using $q=2$ provides a large improvement in about 5 games, about double the performance in the next 5 games, comparable performance in the next 7 games and then slightly worse performance in 3 games (\texttt{PrivateEye}, \texttt{Chopper} and \texttt{Seaquest}). Both \texttt{PrivateEye} and \texttt{Seaquest} are considered harder exploration games, which might explain this discrepancy. The Tsallis policy with $q = 2$ reduces the support on actions, truncating some probabilities to zero. In general, with a higher $q$, the resulting policy is greedier, with $q = \infty$ corresponding to exactly the greedy policy. It is possible that for these harder exploration games, the higher stochasticity in the softmax policy from MVI whre $q = 1$ promoted more exploration. A natural next step is to consider incorporating more directed exploration approaches, into MVI($q=2$), to benefit from the fact that lower-value actions are removed (avoiding taking poor actions) while exploring in a more directed way when needed. 

    We examine the learning curves for the games where MVI($q$) had the most significant improvement, in Figure \ref{fig:learning_curves_atari}. Particularly notable is how much more quickly MVI($q$) learned with $q=2$, in addition to plateauing at a higher point. In \texttt{Hero}, MVI($q$) learned a stably across the runs, whereas standard MVI with $q=1$ clearly has some failures. 
  
  These results are quite surprising. The algorithms are otherwise very similar, with the seemingly small change of using Munchausen term $Q_k(s,a) -  \mathcal{M}_{q=2,\tau}{Q_k}$ instead of $Q_k(s,a) -  \mathcal{M}_{q=1,\tau}{Q_k}$ and using the $q$-logarithm and $q$-exponential for the entropy regularization and policy parameterization. Previous work using $q=2$ to get the sparsemax with entropy regularization generally harmed performance \citep{Lee2018-TsallisRAL,Lee2020-generalTsallisRSS}. It seems that to get the benefits of the generalization to $q > 1$, the addition of the KL regularization might be key. We validate this in the next section.  
  
  \subsection{The Importance of Including KL Regularization}

  In the policy evaluation step of \eq{\ref{eq:tklvi_munchausen}}, if we set $\alpha=0$  then we recover Tsallis-VI which uses regularization $\Omega(\pi)= - S_q(\pi)$ in \eq{\ref{eq:regularizedPI}}. In other words, we recover the algorithm that incorporates entropy regularization using the $q$-logarithm and the resulting sparsemax policy.  
  Unlike MVI, Tsallis-VI has not been comprehensively evaluated on Atari games, so
  we include results for the larger benchmark set comprising 35 Atari games. We plot the percentage improvement of  MVI($q$) over Tsallis-VI in Figure \ref{fig:evi_mdqn}.
  
  The improvement from including the Munchausen term ($\alpha > 0$) is stark. For more than half of the games, MVI($q$) resulted in more than 100\% improvement. For the remaining games it was comparable. For 10 games, it provided more than 400\% improvement. Looking more specifically at which games there was notable improvement, it seems that exploration may again have played a role. MVI($q$) performs much better on \texttt{Seaquest} and \texttt{PrivateEye}. Both MVI($q$) and Tsallis-VI have policy parameterizations that truncate action support, setting probabilities to zero for some actions. The KL regularization term, however, likely slows this down. It is possible the Tsallis-VI is concentrating too quickly, resulting in insufficient exploration.

  \section{Conclusion and Discussion}\label{sec:conclusion}

  We investigated the use of the more general $q$-logarithm for entropy regularization and KL regularization, instead of the standard logarithm ($q=1$), which gave rise to  Tsallis entropy and Tsallis KL regularization. 
  We extended several results previously shown for $q=1$, namely we proved (a) the form of the Tsallis policy can be expressed by $q$-exponential function;
  (b) Tsallis KL-regularized policies are weighted average of past action-values; 
  (c) the convergence of value iteration for $q=2$ and 
  (d) a relationship between adding a $q$-logarithm of policy to the action-value update, to provide implicit Tsallis KL regularization and entropy regularization, generalizing the original Munchausen Value Iteration (MVI). 
  We used these results to propose a generalization to MVI, which we call MVI($q$), because for $q=1$ we exactly recover MVI. We showed empirically that the generalization to $q >1$ can be beneficial, providing notable improvements in the Atari 2600 benchmark. 

\bibliographystyle{abbrvnat}
\bibliography{library}

\clearpage

\appendix


\section{Basic facts of Tsallis KL divergence}\label{apdx:basicfact_qlog}

We present some basic facts about \qlog and Tsallis KL divergence.

We begin by introducing the $2-q$ duality for Tsallis statistics.
Recall that the \qlog and Tsallis entropy defined in the main paper are:
\begin{align*}
  \ln_{q} x = \frac{x^{1-q} - 1}{1 - q}, \quad S_{q}(x) = - \AdaAngleProduct{x^{q}}{ \ln_{q} x}.
\end{align*}
In the RL literature, another definition $q^* = 2 - q$ is more often used \citep{Lee2020-generalTsallisRSS}.
This is called the $2-q$ duality \citep{Naudts2002DeformedLogarithm,Suyari2005-LawErrorTsallis},
which refers to that the Tsallis entropy can be equivalently defined as:
\begin{align*}
  \ln_{q^*} x = \frac{x^{q^*-1} - 1}{q^*-1}, \quad S_{q^*}(x) = - \AdaAngleProduct{x}{ \ln_{q^*} x},
\end{align*}
By the duality we can show \citep[Eq.(12)]{Suyari2005-LawErrorTsallis}:
\begin{align*}
  S_{q}(x) := - \AdaAngleProduct{x^{q}}{ \frac{x^{1-q} - 1}{1 - q}} = \frac{\AdaAngleProduct{\boldsymbol{1}}{x^{q}} - 1 }{1 - q}= \frac{\AdaAngleProduct{\boldsymbol{1}}{x^{q^*}} - 1}{1-q^*}  = -\AdaAngleProduct{x}{\frac{x^{q^*-1}-1}{q^*-1}}  =: S_{q^*}(x),
\end{align*}
i.e. the duality between logarithms $\ln_{q^*}x$ and $\ln_{q}x$ allows us to define Tsallis entropy by an alternative notation $q^*$ that eventually reaches to the same functional form.

We now come to examine Tsallis KL divergence (or Tsallis relative entropy) defined in another form:
$\qKLany{\pi}{\mu} = \AdaAngleProduct{\pi}{\ln_{q^*}{\frac{\pi}{\mu}}}$  \citep{TsallisRelativeEntropy}.
 In the main paper we used the definition $\qKLany{\pi}{\mu} = \AdaAngleProduct{\pi}{-\ln_{q}{\frac{\mu}{\pi}}}$ \citep{Furuichi2004-fundamentals-qKL}.
We show they are equivalent by the same logic:
\begin{align}
  \AdaAngleProduct{\pi}{-\ln_{q}\frac{\mu}{\pi}} = \AdaAngleProduct{\pi}{-\frac{\AdaBracket{\frac{\mu}{\pi}}^{1-q}-1}{1-q}}
   = \AdaAngleProduct{\pi}{\frac{\AdaBracket{\frac{\pi}{\mu}}^{q-1}-1}{q-1}} = \AdaAngleProduct{\pi}{\ln_{q^*}{\frac{\pi}{\mu}}}.
\end{align}
The equivalence allows us to work with whichever of $\ln_q$ and $\ln_{q^*}$ that makes the proof easier to work out the following useful properties of Tsallis KL divergence:

$\boldsymbol{-}$ Nonnegativity $\qKLany{\pi}{\mu} \geq 0$: since the function $-\ln_{q}{\pi}$ is convex, by Jensen's inequality
  \begin{align*}
    \AdaAngleProduct{\pi}{-\ln_{q}\frac{\mu}{\pi}}  \geq -\ln_{q}\AdaAngleProduct{\pi}{\frac{\mu}{\pi}} = 0,
  \end{align*}
  
$\boldsymbol{-}$ Conditions of $\qKLany{\pi}{\mu}=0$: 
  directly from the above, in Jensen's inequality the equality holds only when $\frac{\mu}{\pi}=1$ almost everywhere, i.e. $\qKLany{\pi}{\mu}=  0$ implies $\mu=\pi$ almost everywhere.
  
$\boldsymbol{-}$ Conditions of $\qKLany{\pi}{\mu} = \infty$: 
  To better align with the standard KL divergence, let us work with $\ln_{q^*}$, following \citep{Cover2006-information}, let us define
  \begin{align*}
    0\ln_{q^*}{\frac{0}{0}} = 0, \quad 0 \ln_{q^*}{\frac{0}{\mu}} = 0, \quad \pi\ln_{q^*}{\frac{\pi}{0}} = \infty.
  \end{align*} 
  We conclude that $\qKLany{\pi}{\mu} = \infty$ whenever $\pi > 0$ and $\mu = 0$.
  
$\boldsymbol{-}$ Bounded entropy $\forall q, \, 0 \leq \tsallis{\pi} \leq \ln_q {|\mathcal{A}|}$: let $\mu = \frac{1}{|\mathcal{A}|}$, by the nonnegativity of Tsallis KL divergence:
  \begin{align*}
      \qKLany{\pi}{\mu} &= \AdaAngleProduct{\pi}{-\ln_{q}\frac{1}{\AdaBracket{|\mathcal{A}|\cdot\pi}}} = \AdaAngleProduct{\pi}{\frac{\AdaBracket{|\mathcal{A}|\cdot \pi }^{q-1} -1 }{q-1}}\\
      &= |\mathcal{A}|^{q-1} \AdaBracket{\frac{\AdaAngleProduct{\boldsymbol{1}}{\pi^q} - 1}{q-1} - \frac{\frac{1}{|\mathcal{A}|^{q-1}} - 1}{q-1}} \geq 0.
  \end{align*}
  Notice that $\frac{\AdaAngleProduct{\boldsymbol{1}}{\pi^q} - 1}{q-1} = \AdaAngleProduct{\pi^q}{\frac{1 - \pi^{1-q}}{1-q}} =\AdaAngleProduct{\pi}{\logqstar{\pi}} = -\tsallis{\pi}$ and $\frac{\frac{1}{|\mathcal{A}|^{q-1}} - 1}{q-1} = \logqstar{{|\mathcal{A}|}}$, we conclude
  \begin{align*}
    \tsallis{\pi} \leq \ln_q {|\mathcal{A}|}.
  \end{align*}

\section{Proof of Theorem \ref{thm:expq_policies} and \ref{thm:approximate_psi}}\label{apdx:approximate_policy}

We structure this section as the following three parts:
\begin{enumerate}
    \item Tsallis entropy regularized policy has general expression for all $q$. Moreover, $q$ and $\tau$ are interchangeable for controlling the truncation (Theorem \ref{thm:expq_policies}).
    \item The policies can be expressed by $q$-exponential  (Theorem \ref{thm:expq_policies}).
    \item We present a computable approximate threshold $\hat{\psi}_{q}$ (Theorem \ref{thm:approximate_psi}).
\end{enumerate}

\textbf{General expression for Tsallis entropy regularized policy. }
The original definition of Tsallis entropy is  $S_{q^*}(\pi(\cdot|s)) = \frac{p}{q^*-1}\AdaBracket{1 - \sum_a \pi^{q^*}(a|s)}, q^*\in\mathbb{R}, \,\, p \in \mathbb{R}_+$.
Note that similar to Appendix \ref{apdx:basicfact_qlog}, we can choose whichever convenient of $q$ and $q^*$, since the domain of the entropic index is $\mathbb{R}$.
To obtain the Tsallis entropy-regularized policies we follow \citep{chen2018-TsallisApproximate}.
The derivation begins with assuming an actor-critic framework where the policy network is parametrized by $w$.
It is well-known that the parameters should be updated towards the direction specified
 by the policy gradient theorem:
\begin{align}
    \Delta w \propto \mathbb{E}_{\pi}  \AdaRectBracket{ Q_{\pi}\frac{\partial \ln\pi}{\partial w}  + \tau \frac{\partial \entropy}{\partial w}  } - \sum_{s}  \lambda(s) \frac{\partial\AdaAngleProduct{\boldsymbol{1}}{\pi}}{\partial w}  =: f(w),
    \label{eq:policy_network}
\end{align}
Recall that $\entropy$ denotes the Shannon entropy and $\tau$ is the coefficient.
$\lambda(s)$ are the Lagrange multipliers for the constraint  $\AdaAngleProduct{\boldsymbol{1}}{\pi}=1$.
In the Tsallis entropy framework, we replace $\entropy$ with $S_{q^*}(\pi)$.
We can assume $p=\frac{1}{q^*}$ to ease derivation, which is the case for sparsemax.

We can now explicitly write the optimal condition for the policy network parameters:
\begin{align}
    \begin{split}
        &f(w) = 0 =  \mathbb{E}_{\pi} \AdaRectBracket{ Q_{\pi}\frac{\partial \ln\pi}{\partial w} +  \tau\frac{\partial S_{q^*}(\pi)}{\partial w} }  - \sum_{s} \lambda(s) \frac{\partial \AdaAngleProduct{\boldsymbol{1}}{\pi}}{\partial w} \\
        & = \mathbb{E}_{\pi} \AdaRectBracket{ Q_{\pi}\frac{\partial \ln\pi}{\partial w} - \tau\frac{1}{q^*-1}\AdaAngleProduct{\boldsymbol{1}}{\pi^{q^*}\frac{\partial \ln\pi}{\partial{w}}  } - \tilde{\psi}_q(s)\frac{\partial \ln\pi}{\partial w} } \\
        & = \mathbb{E}_{\pi} \AdaRectBracket{ \AdaBracket{ Q_{\pi} - \tau\frac{1}{q^*-1}{\pi^{q^*-1}  } - \tilde{\psi}_q(s) } \frac{\partial \ln\pi}{\partial w} },
    \end{split}
\end{align}
where we leveraged $\frac{\partial S_{q^*}(\pi)}{\partial w} = \frac{1}{q^*-1}\AdaAngleProduct{\boldsymbol{1}}{\pi^{q^*}\frac{\partial \ln\pi}{\partial{w}}  }$ in the second step and absorbed terms into the expectation in the last step. 
$\tilde{\psi}_q(s)$ denotes the adjusted Lagrange multipliers by taking $\lambda(s)$ inside the expectation and modifying it according to the discounted stationary distribution.

Now it suffices to verify either $\frac{\partial \ln\pi}{\partial w}=0 $ or 
\begin{align}
    \begin{split}
        &Q_{\pi}(s,a) - \tau\frac{1}{q^*-1}{\pi^{q^*-1}(a|s)  } - \tilde{\psi}_q(s)  = 0\\
        \Leftrightarrow \quad &\pi^{*}(a|s) = \sqrt[\leftroot{-2}\uproot{16}q^*-1]{\AdaRectBracket{\frac{Q_{\pi}(s,a)}{\tau}  - \frac{\tilde{\psi}_q\AdaBracket{s}}{\tau} }_{+}  (q^*-1) }, \\
        \text{or } \quad &\pi^{*}(a|s) = \sqrt[\leftroot{-2}\uproot{16}1-q]{\AdaRectBracket{\frac{Q_{\pi}(s,a)}{\tau}  - \frac{\tilde{\psi}_q\AdaBracket{s}}{\tau} }_{+}  (1-q) },
    \end{split}
    \label{eq:approximate_policy}
\end{align}
where we changed the entropic index from $q^*$ to $q$.
Clearly, the root does not affect truncation.
Consider the pair $(q^*=50, \tau)$, then the same truncation effect can be achieved by choosing $(q^*=2, \frac{\tau}{50-1})$.
The same goes for $q$. Therefore, we conclude that $q$ and $\tau$ are interchangeable for the truncation, and we should stick to the analytic choice $q^*=2 (q=0)$.

\textbf{Tsallis policies can be expressed by $q$-exponential. }
Given \eq{\ref{eq:approximate_policy}}, by adding and subtracting $1$, we have:
\begin{align*}
    \pi^{*}(a|s) = \!\!\!\!\! \sqrt[\leftroot{-2}\uproot{16}1-q]{\AdaRectBracket{1 +  (1-q) \AdaBracket{\frac{Q_{\pi}(s,a)}{\tau}  - \tilde{\psi}_q\AdaBracket{\frac{Q_{\pi}(s,\cdot)}{\tau}} - \frac{1}{1-q} }}_{+} } \!\! = \exp_q\AdaBracket{\frac{Q_{\pi}(s,a)}{\tau}  - \hat{\psi}_{q}\AdaBracket{\frac{Q_{\pi}(s,\cdot)}{\tau}}},
\end{align*}
where we defined $\hat{\psi}_{q} = \tilde{\psi}_q + \frac{1}{1-q}$. 
Note that this expression is general for all $q$, but whether $\pi^*$ has closed-form expression depends on the solvability of $\tilde{\psi}_q$.

Let us consider the extreme case $q=\infty$. It is clear that $\lim_{q\rightarrow \infty} \frac{1}{1-q} \rightarrow 0$. Therefore, for any $x>0$ we must have $x^{\frac{1}{1-q}} \rightarrow 1$; i.e., there is only one action with probability 1, with all others being 0. 
This conclusion agrees with the fact that $S_q(\pi)\rightarrow 0$ as $q \rightarrow \infty$: hence the regularized policy degenerates to $\argmax$.


\textbf{A computable Normalization Function. }
The constraint $\sum_{a\in K(s)}\pi^*(a|s) = 1$ is exploited to obtain the threshold $\psi$ for the sparsemax \citep{Lee2018-TsallisRAL,Nachum18a-tsallis}.
Unfortunately, this is only possible when the root vanishes, since otherwise the constraint yields a summation of radicals.
Nonetheless, we can resort to first-order Taylor's expansion for deriving an approximate policy.
Following \citep{chen2018-TsallisApproximate}, let us expand \eq{\ref{eq:approximate_policy}} by the first order Taylor's expansion $f(z) + f'(z)(x-z)$,
where we let $z=1$, $x= \AdaRectBracket{\frac{Q_{\pi}(s,a)}{\tau}  - \tilde{\psi}_q\AdaBracket{\frac{Q_{\pi}(s,\cdot)}{\tau}} }_{+}  (1-q) $,  $f(x) = x^{\frac{1}{1-q}}$, $f'(x) = \frac{1}{1-q}x^\frac{q}{1-q}$. So that the unnormalized approximate policy has
\begin{align}
    \begin{split}
        \tilde{\pi}^{*}(a|s) &\approx f(z) + f'(z)(x-z) \\
        & = 1 + \frac{1}{1-q}\AdaBracket{\AdaBracket{\frac{Q_{\pi}(s,a)}{\tau} - \tilde{\psi}_q\AdaBracket{\frac{Q_{\pi}(s,\cdot)}{\tau}} } (1-q)  - 1} .
    \end{split}
    \label{eq:apdx_approximate_policy}
\end{align}
Therefore it is clear as $q\rightarrow\infty, \tilde{\pi}^{*}(a|s) \rightarrow 1$.
This concords well with the limit case where $\pi^{*}(a|s)$ degenerates to $\argmax$.
With \eq{\ref{eq:apdx_approximate_policy}}, we can solve for the approximate normalization by the constraint $\sum_{a\in K(s)}\pi^*(a|s) = 1$:
\begin{align*}
    1 &= \sum_{a\in K(s)} \AdaRectBracket{1 + \frac{1}{1-q}\AdaBracket{\AdaBracket{\frac{Q_{\pi}(s,a)}{\tau} - \tilde{\psi}_q\AdaBracket{\frac{Q_{\pi}(s,\cdot)}{\tau}} } (1-q)  - 1}} \\
    &= |K(s)| - \frac{1}{1-q}|K(s)| + \sum_{a\in K(s)} \AdaRectBracket{\frac{Q_{\pi}(s,a)}{\tau} - \tilde{\psi}_q\AdaBracket{\frac{Q_{\pi}(s,\cdot)}{\tau}}} \\
    & \Leftrightarrow \tilde{\psi}_q\AdaBracket{\frac{Q_{\pi}(s,\cdot)}{\tau}} = \frac{\sum_{a\in K(s)} \frac{Q_{\pi}(s,\cdot)}{\tau} - 1}{|K(s)|} + 1 - \frac{1}{1-q}.
\end{align*}
In order for an action to be in $K(s)$, it has to satisfy $\frac{Q_{\pi}(s,\cdot)}{\tau} > \frac{\sum_{a\in K(s)} \frac{Q_{\pi}(s,\cdot)}{\tau} - 1}{|K(s)|} + 1 - \frac{1}{1-q}$.
Therefore, the condition of $K(s)$ satisfies:
\begin{align*}
    1 + i \frac{Q_{\pi}(s, a_{(i)})}{\tau} > \sum_{j=1}^i {\frac{Q_{\pi}(s, a_{(j)})}{\tau} } + i\AdaBracket{1 - \frac{1}{1-q}}.
\end{align*}
Therefore, we see the approximate threshold $\hat{\psi}_q = \tilde{\psi}_q + 1$.
When $q=0$ or $q^*=2$, $\hat{\psi}_q$ recovers $\psi$ and hence $\tilde{\pi}^*$ recovers the exact sparsemax policy.

\begin{algorithm}[tb]
  \caption{MVI($q$)}
  \label{alg:tklvi_general}
\begin{algorithmic}
  \STATE {\bfseries Input:} number of iterations $T$, entropy coefficient $\tau$,  TKL coefficient $\alpha$
  \STATE Initialize $Q_0, \pi_0$ arbitrarily
  \STATE Let $\{\mathcal{|A|}\} = \{1, 2, \dots, |\mathcal{A}|\}$
  \FOR{$k=1, 2 ,\dots, T$}
    \STATE \# \texttt{Policy Improvement}
    \FOR{$(s,a)\in \mathcal{(S, A)}$}
      \STATE Sort $Q_{k}(s,a_{(1)}) > \dots > Q_{k}(s,a_{(|\mathcal{A}|)})$
      \STATE Find $K(s) = \max \left\{ i\in \{\mathcal{|A|} \} \,\big| \, 1 + i \frac{Q_{k}(s,a_{(i)})}{\tau} > { \sum_{j=1}^{i}\frac{Q_{k}(s,a_{(j)})}{\tau} + i\AdaBracket{1 - \frac{1}{1-q}}}\right\}$
      \STATE Compute ${\hat{\psi_{q}}}\AdaBracket{\frac{Q_k(s,\cdot)}{\tau}}  = \frac{\sum_{a\in K(s)} \frac{Q_k(s,a)}{\tau} - 1}{|K(s)|} + 1 $
      \STATE \# \texttt{Normalize when } $q\neq 2$
      \STATE $\pi_{k+1}(a|s) \propto \exp_q\AdaBracket{\frac{Q_k(s,a)}{\tau} - {\hat{\psi}_q}\AdaBracket{\frac{Q_k(s,\cdot)}{\tau}}}$
    \ENDFOR
  \STATE \# \texttt{Policy Evaluation}
    \FOR{$(s,a,s')\in \mathcal{(S, A)}$}
      \STATE $Q_{k+1}(s, a) = r(s,a) + \alpha\tau \AdaBracket{Q_k(s,a) -  \mathcal{M}_{q,\tau}{Q_k}(s)} + \gamma\sum_{b\in\mathcal{A}}\pi_{k+1}(b|s')\AdaBracket{Q_k(s',b)-\tau\ln_q{\pi_{k+1}(b|s')}}$
    \ENDFOR
  \ENDFOR
\end{algorithmic}
\end{algorithm}

\section{Proof of convergence of $\Omega(\pi) = \qKLany{\pi}{\cdot}$ when $q=2$}\label{apdx:proof_qkl2}

Let us work with $\ln_{q^*}$ from Appendix \ref{apdx:basicfact_qlog}
and define $\MyNorm{\cdot}{p}$ as the $l_p$-norm.
The convergence proof for  $\Omega(\pi) = \qKLany{\pi}{\cdot}$ when $q=2$ comes from that $\Omega(\pi)$ is strongly convex in $\pi$:
\begin{align}
  \Omega(\pi) = D_{KL}^{q^*=2}\AdaBracket{{\pi}||{\cdot}} = \AdaAngleProduct{\pi}{\ln_2\frac{\pi}{\cdot}} = \AdaAngleProduct{\pi}{\frac{\AdaBracket{\frac{\pi}{\cdot}}^{2-1} - 1}{2-1}} \propto {\MyNorm{\frac{\pi}{\cdot}}{2}^2-1}.
\end{align}
Similarly, the negative Tsallis sparse entropy $-S_2(\pi)$ is also strongly convex.
Then the propositions of  \citep{geist19-regularized} can be applied, which we restate in the following:
\begin{lemma}[\citep{geist19-regularized}]
  Define regularized value functions as:
  \begin{align*}
    Q_{\pi, \Omega} = r + \gamma P V_{\pi, \Omega}, \qquad
    V_{\pi, \Omega} = \AdaAngleProduct{\pi}{Q_{\pi,\Omega}} - \Omega(\pi).
  \end{align*}
  If $\Omega(\pi)$ is strongly convex, let $\Omega^*(Q) = \max_{\pi} \AdaAngleProduct{\pi}{Q} - \Omega(\pi)$ denote the Legendre-Fenchel transform of $\Omega(\pi)$, then
  \begin{itemize}
    \item $\nabla\Omega^*$ is Lipschitz and is the unique maximizer of $\argmax_\pi \AdaAngleProduct{\pi}{Q} - \Omega(\pi)$.
    \item $T_{\pi,\Omega}$ is a $\gamma$-contraction in the supremum norm, i.e. $\MyNorm{T_{\pi,\Omega}V_1 - T_{\pi,\Omega}V_2}{\infty} \leq \gamma\MyNorm{V_1 - V_2}{\infty}$. 
    Further, it has a unique fixed point $V_{\pi, \Omega}$.
    \item The policy $\pi_{*,\Omega} = \argmax_\pi \AdaAngleProduct{\pi}{Q_{*,\Omega}} - \Omega(\pi)$ is the unique optimal regularized policy.
  \end{itemize}
\end{lemma}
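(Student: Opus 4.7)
The plan is to dispatch the three bulleted claims using standard tools from convex analysis and contraction theory. For the first bullet, I would invoke the classical duality between strong convexity of $\Omega$ and Lipschitz smoothness of the Legendre--Fenchel conjugate $\Omega^*$: if $\Omega$ is strongly convex with modulus $\sigma$, then $\Omega^*$ is continuously differentiable with $(1/\sigma)$-Lipschitz gradient. With finite $\mathcal{A}$, the conjugate $\Omega^*(Q) = \max_\pi\{\langle\pi,Q\rangle - \Omega(\pi)\}$ is finite everywhere and the maximand is strictly concave in $\pi$, so the maximizer is unique. Danskin's envelope theorem then identifies this unique maximizer with $\nabla\Omega^*(Q)$, simultaneously yielding the characterization and the Lipschitz smoothness.

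For the second bullet, fixing $\pi$ makes $T_{\pi,\Omega}$ affine in $V$: one writes $T_{\pi,\Omega}V(s) = \langle\pi(\cdot\mid s), r(s,\cdot)\rangle - \Omega(\pi(\cdot\mid s)) + \gamma\langle\pi(\cdot\mid s),(PV)(s,\cdot)\rangle$, so the $\Omega$ contribution is a constant offset in $V$ and $T_{\pi,\Omega}V_1(s) - T_{\pi,\Omega}V_2(s) = \gamma\langle\pi(\cdot\mid s),P(V_1-V_2)(s,\cdot)\rangle$, whose absolute value is at most $\gamma\|V_1-V_2\|_\infty$ because $\pi(\cdot\mid s)$ and $P(\cdot\mid s,a)$ are probability distributions. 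This is a $\gamma$-contraction on the complete space $(\mathbb{R}^{|\mathcal{S}|},\|\cdot\|_\infty)$, so Banach's fixed-point theorem produces the unique $V_{\pi,\Omega}$. For the third bullet, I would write $T_{*,\Omega}V(s) = \Omega^*\bigl(r(s,\cdot) + \gamma(PV)(s,\cdot)\bigr)$ pointwise and perform the standard two-sided envelope bound: letting $\pi_i^*$ denote the unique (by bullet 1) maximizer against $r+\gamma PV_i$, the inequality $T_{*,\Omega}V_1(s) - T_{*,\Omega}V_2(s) \le \gamma\langle\pi_1^*(\cdot\mid s),P(V_1-V_2)(s,\cdot)\rangle \le \gamma\|V_1-V_2\|_\infty$, together with its symmetric counterpart, delivers the $\gamma$-contraction. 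Banach then supplies a unique fixed point $V_{*,\Omega}$, and applying bullet 1 at the corresponding $Q_{*,\Omega}$ pins down $\pi_{*,\Omega}$ as the unique optimal regularized policy.

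The main obstacle I anticipate is not any individual inequality, all of which are textbook, but invoking the convex-duality machinery cleanly: strong convexity and the induced smoothness must be paired with respect to matched dual norms, and Danskin's theorem requires compactness of the feasible set together with finiteness of $\Omega^*$ on the relevant domain for the envelope identity to hold. These preconditions follow easily from the finiteness of $\mathcal{A}$ and the fact that the effective domain of $\Omega$ is the simplex $\Delta_{|\mathcal{A}|}$, so the full argument reduces, once properly set up, to a clean composition of Danskin's theorem, a one-step contraction estimate, and Banach's fixed-point theorem.
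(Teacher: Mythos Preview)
Your proof sketch is correct and follows the standard route (strong convexity $\Rightarrow$ Lipschitz-smooth conjugate via Fenchel duality, Danskin for the maximizer identification, a direct one-step contraction estimate for both $T_{\pi,\Omega}$ and $T_{*,\Omega}$, and Banach to close). However, there is nothing in this paper to compare it against: the lemma is not proved here at all. It is simply restated verbatim from \cite{geist19-regularized} and invoked as a black box in Appendix~\ref{apdx:proof_qkl2} to conclude Theorem~\ref{thm:converge} once strong convexity of $D_{KL}^{q=2}(\pi\|\cdot)$ has been checked. So your write-up is not an alternative to the paper's argument; it is a reconstruction of the cited result that the paper deliberately outsources.
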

Note that in the main paper we dropped the subscript $\Omega$ for both the regularized optimal policy and action value function to lighten notations. 
It is now clear that \eq{\ref{eq:TsallisKLPI}} indeed converges for entropic indices that make $\qKLany{\pi}{\cdot}$ strongly convex. But we mostly consider the case $q=2$.

\section{Derivation of the Tsallis KL Policy}\label{apdx:tsallis_kl_policy}

This section contains the proof for the Tsallis KL-regularized policy \eqref{eq:pseudo_average}.
Section \ref{apdx:tkl_similar_kl} shows that a Tsallis KL policy can also be expressed by a series of multiplications of $\exp_q\AdaBracket{Q}$;
while Section \ref{apdx:more_than_avg} shows its more-than-averaging property.

\subsection{Tsallis KL Policies are Similar to KL}\label{apdx:tkl_similar_kl}


We extend the proof and use the same notations from \citep[Appendix D]{Lee2020-generalTsallisRSS} to derive the Tsallis KL regularized policy.
Again let us work with $\ln_{q^*}$ from Appendix \ref{apdx:basicfact_qlog}.
Define state visitation as $\rho_{\pi}(s) = \mathbb{E}_{\pi}\AdaRectBracket{\sum_{t=0}^{\infty}\mathbbm{1}(s_t=s)}$ and state-action visitaion $\rho_{\pi}(s,a) = \mathbb{E}_{\pi}\AdaRectBracket{\sum_{t=0}^{\infty}\mathbbm{1}(s_t=s, a_t=a)}$.
The core of the proof resides in establishing the one-to-one correspondence between the policy and the induced state-action visitation $\rho_{\pi}$.
For example, Tsallis entropy is written as 
\[S_{q^*}(\pi)=S_{q^*}(\rho_{\pi}) = -\sum_{s,a}\rho_{\pi}(s,a)\ln_{q^*}{\frac{\rho_{\pi}(s,a)}{\sum_{a}\rho_{\pi}(s,a)}}.
\]
This unique correspondence allows us to replace the optimization variable from $\pi$ to $\rho_{\pi}$. 
Indeed, one can always restore the policy by $\pi(a|s) := \frac{\rho_{\pi}(s,a)}{\sum_{a'}\rho_{\pi}(s,a')}$.

Let us write Tsallis KL divergence as $\qstarKLany{\pi}{\mu} = \qstarKLany{\rho}{\nu}=\sum_{s,a}\rho(s,a)\ln_{q^*}\frac{\rho(s,a)\sum_{a'}\nu(s,a')}{\nu(s,a)\sum_{a'}\rho(s,a')}$ by replacing the policies $\pi, \mu$ with their state-action visitation $\rho, \nu$.
One can then convert the Tsallis MDP problem into the following problem:
\begin{align}
\begin{split}
    &\max_{\rho}  \sum_{s,a} \rho(s,a)\sum_{s'} r(s,a)P(s'|s,a) - \qstarKLany{\rho}{\nu} \\
    &\text{subject to } \forall s,a, \quad \rho(s,a)>0, \\
    &\sum_{a}\rho(s,a) = d(s) + \sum_{s',a'}P(s|s',a')\rho(s',a'),
\end{split}
\label{eq:bellman_flow}
\end{align}
where $d(s)$ is the initial state distribution.
\eq{\ref{eq:bellman_flow}} is known as the Bellman Flow Constraints \citep[Prop. 5]{Lee2020-generalTsallisRSS} and is concave in $\rho$ since the first term is linear and the second term is concave in $\rho$. 
Then the primal and dual solutions satisfy KKT conditions  sufficiently and necessarily.
Following \citep[Appendix D.2]{Lee2020-generalTsallisRSS}, we define the Lagrangian objective as
\begin{align*}
    &\mathcal{L} := \sum_{s,a} \rho(s,a)\sum_{s'}  r(s,a)P(s'|s,a) - \qstarKLany{\rho}{\nu} + \sum_{s,a}\lambda(s,a)\rho(s,a) \\
    & \qquad + \sum_{s}\zeta(s)\AdaBracket{d(s) + \sum_{s',a'}P(s|s',a')\rho(s',a') - \sum_{a}\rho(s,a)}
\end{align*}
where $\lambda(s,a)$ and $\zeta(s)$ are dual variables for nonnegativity and Bellman flow constraints.
The KKT conditions are:
\begin{align*}
    \forall s,a, \quad \rho^*(s,a) &\geq 0,  \\
    d(s) + \sum_{s',a'}P(s|s',a')\rho^*(s',a') - \sum_{a}\rho^*(s,a) &= 0,\\
     \lambda^*(s,a) \leq 0, \quad \lambda^*(s,a) \rho^*(s,a) &= 0, \\
    0 = \sum_{s'}  r(s,a)P(s'|s,a) + \gamma\sum_{s'}\zeta^*(s')P(s'|s,a)  - \zeta^*(s) + \lambda^*(s,a) &- \frac{\partial \qstarKLany{\rho^{*}}{\nu}}{\partial \rho(s,a)}, \\
     \text{where } -\frac{\partial \qstarKLany{\rho^{*}}{\nu}}{\partial \rho(s,a)} =   - \ln_{q^*}{\frac{\rho^*(s,a)\sum_{a'}\nu(s,a')}{\nu(s,a)\sum_{a'}\rho^*(s,a')}} - &\AdaBracket{\frac{\rho^*(s,a)\sum_{a'}\nu(s,a')}{\nu(s,a)\sum_{a'}\rho^*(s,a')}}^{q^*-1} \\
     + \sum_{a} \AdaBracket{\frac{\rho^*(s,a)}{\sum_{a'}\rho^*(s,a')}}^{q^*}  &\AdaBracket{\frac{\sum_{a'}\nu(s,a)}{\nu(s,a)}}^{q^*-1}.
\end{align*}
The dual variable $\zeta^*(s)$ can be shown to equal to the optimal state value function $V^*(s)$ following \citep{Lee2020-generalTsallisRSS}, and $\lambda^*(s,a) = 0$ whenever $\rho^*(s,a) > 0$.

By noticing that $x^{q^*-1} = (q^*-1)\ln_{q^*}{x} + 1$, we can show that  $-\frac{\partial \qstarKLany{\rho^{*}}{\nu}}{\partial \rho(s,a)} = -q^*\ln_{q^*}{\frac{\rho^*(s,a)\sum_{a'}\nu(s,a')}{\nu(s,a)\sum_{a'}\rho^*(s,a')}} - 1 + \sum_{a} \AdaBracket{\frac{\rho^*(s,a)}{\sum_{a'}\rho^*(s,a')}}^{q^*}  \AdaBracket{\frac{\sum_{a'}\nu(s,a)}{\nu(s,a)}}^{q^*-1}$.
Substituting $\zeta^*(s) = V^*(s)$,  $\pi^*(a|s) = \frac{\rho^*(s,a)}{\sum_{a'}\rho^*(s,a)}$, $\mu^*(a|s) = \frac{\nu^*(s,a)}{\sum_{a'}\nu^*(s,a)}$ into the above KKT condition and leverage the equality $Q^*(s,a) = r(s,a) +  \mathbb{E}_{s'\sim P}[\gamma\zeta^*(s')]$ we have:
\begin{align*}
    & Q^*(s,a) - V^*(s) - q^*\ln_{q^*}{\frac{\pi(a|s)}{\mu(a|s)}} - 1 + \sum_{a'}\pi(a|s)  \AdaBracket{\frac{\pi(a|s)}{\mu(a|s)}}^{q^*-1} = 0 \\ &\Leftrightarrow \pi^*(a|s) = \mu(a|s)\exp_{q^*}\AdaBracket{\frac{Q^*(s,a)}{q^*} - \frac{V^*(s) + 1 - \sum_{a'}\pi(a|s)  \AdaBracket{\frac{\pi(a|s)}{\mu(a|s)}}^{q^*-1} }{q^*}}.
\end{align*}
By comparing it to the maximum Tsallis entropy policy \citep[Eq.(49)]{Lee2020-generalTsallisRSS} we see the only difference lies in the baseline term $\mu(a|s)^{-(q^*-1)}$, which is expected since we are exploiting Tsallis KL regularization.
Let us define the normalization function as
\begin{align*}
    \psi\AdaBracket{\frac{Q^*(s, \cdot)}{q^*}} = \frac{V^*(s) + 1 - \sum_{a}\pi(a|s)  \AdaBracket{\frac{\pi(a|s)}{\mu(a|s)}}^{q^*-1}}{q^*} ,
\end{align*}
then we can write the policy as
\begin{align*}
    \pi^*(a|s) = \mu(a|s)\exp_{q^*}\AdaBracket{\frac{Q^*(s,a)}{q^*} - \psi\AdaBracket{\frac{Q^*(s, \cdot)}{q^*}} }.
\end{align*}
In a way similar to KL regularized policies, at $k+1$-th update, take $\pi^*=\pi_{k+1}, \mu=\pi_{k}$ and $Q^* = Q_{k}$, we write $\pi_{k+1}\propto\pi_{k}\expqstar{Q_{k}}$ since the normalization function does not depend on actions.  We ignored the scaling constant $q^*$ and regularization coefficient.
Hence one can now expand Tsallis KL policies as:
\begin{align*}
  \pi_{k+1} \propto \pi_{k}\exp_{q^*}{\AdaBracket{Q_k}} \propto \pi_{k-1} \exp_{q^*}{\AdaBracket{Q_{k-1}}}  \exp_{q^*}{\AdaBracket{Q_k}} \propto \cdots \propto \exp_{q^*}{Q_1}\cdots\exp_{q^*}{Q_k},
\end{align*}
which proved the first part of \eq{\ref{eq:pseudo_average}}.

\subsection{Tsallis KL Policies Do More than Average}\label{apdx:more_than_avg}

We now show the second part of \eq{\ref{eq:pseudo_average}}, which stated that the Tsallis KL policies do more than average.
This follows from the following lemma:
\begin{lemma}[Eq. (25) of \citep{Yamano2004-properties-qlogexp}]
\begin{align}
    \begin{split}
        \AdaBracket{\exp_{q}{x_1}\dots\exp_{q}{x_n}}^{1-q} = 
        \exp_{q}\AdaBracket{\sum_{j=1}^{k}x_j}^{1-q}   +  \sum_{j=2}^{k}(1-q)^j \sum_{i_{1}=1 < \dots < i_j}^k  x_{i_1} \cdots x_{i_j}.
    \end{split}
\end{align}
\end{lemma}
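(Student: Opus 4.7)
The plan is to reduce the identity to a binomial-type expansion of an $n$-fold product of affine factors. The starting point is the defining formula of the $q^*$-exponential: on the regime $1 + (1-q^*)x \ge 0$, one has $\exp_{q^*}(x) = \left[1 + (1-q^*)x\right]^{1/(1-q^*)}$, and therefore raising to the power $1-q^*$ simply collapses this to the affine expression $\exp_{q^*}(x)^{1-q^*} = 1 + (1-q^*)x$. This one-line algebraic collapse is the key observation that turns the lemma from an analytic statement about deformed exponentials into a purely combinatorial expansion.

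First I would apply the collapse to the left-hand side, obtaining
\begin{align*}
\left(\exp_{q^*}(x_1)\cdots\exp_{q^*}(x_n)\right)^{1-q^*} = \prod_{i=1}^{n}\left(1 + (1-q^*)x_i\right),
\end{align*}
and then expand the product by distributing, grouping terms by how many factors $(1-q^*)x_i$ are selected:
\begin{align*}
\prod_{i=1}^{n}\left(1 + (1-q^*)x_i\right) = \sum_{j=0}^{n}(1-q^*)^j \sum_{i_1 < \dots < i_j} x_{i_1}\cdots x_{i_j}.
\end{align*}

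Second, I would split this sum according to $j = 0$, $j = 1$, and $j \ge 2$. The $j=0$ contribution is the constant $1$, and the $j=1$ contribution is $(1-q^*)\sum_{i=1}^{n} x_i$. Applying the collapse $\exp_{q^*}(y)^{1-q^*} = 1 + (1-q^*)y$ now in the reverse direction at $y = \sum_{i=1}^{n} x_i$ recombines those two low-order contributions into exactly $\exp_{q^*}\!\left(\sum_{j=1}^{n} x_j\right)^{1-q^*}$. What is left over is precisely the higher-order tail $\sum_{j=2}^{n}(1-q^*)^j \sum_{i_1 < \dots < i_j} x_{i_1}\cdots x_{i_j}$, which matches the stated right-hand side (modulo the evident notational coincidence $n = k$ between the product index on the left and the summation bound on the right).

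The only real subtlety is the $[\,\cdot\,]_+$ truncation in the definition of $\exp_{q^*}$: the collapse identity holds cleanly only on the domain where $1 + (1-q^*)x_i \ge 0$ for every $i$, so I would either state the lemma on that natural domain or read it as a formal polynomial identity in the indeterminates $x_1,\dots,x_n$; either interpretation suffices for the downstream use in Proposition~\ref{prop1}, where the positivity is guaranteed on the support selected by the sparsemax. The main obstacle is therefore not analytic but notational bookkeeping: the unordered enumeration over subsets $S \subseteq \{1,\dots,n\}$ must be repackaged into the ordered-index form $i_1 < \dots < i_j$, and the $j \le 1$ pieces must be re-folded into a single $\exp_{q^*}$ on the sum. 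Once these routine substitutions are tracked, the identity drops out immediately from the two applications of the affine collapse.
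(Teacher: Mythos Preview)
Your proof is correct and takes a genuinely different route from the paper. The paper (following Yamano) proceeds by induction on $n$, with the inductive step driven by the two-point identity $\left(\exp_{q^*}x\cdot\exp_{q^*}y\right)^{1-q^*}=\exp_{q^*}(x+y)^{1-q^*}+(1-q^*)^2 xy$; the paper's contribution is merely to verify that this two-point property survives the $q=2-q^*$ relabelling, and then defers the $n$-point case to Yamano's induction. You instead bypass the induction entirely by observing that the affine collapse $\exp_{q^*}(x)^{1-q^*}=1+(1-q^*)x$ reduces both sides to the elementary-symmetric-polynomial expansion of $\prod_i\bigl(1+(1-q^*)x_i\bigr)$, after which the identity is immediate. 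Your argument is shorter and more transparent about why the cross terms carry the powers $(1-q^*)^j$; the inductive argument, on the other hand, makes the recursive structure explicit and generalizes more readily to settings where only a pairwise pseudo-additivity rule is available rather than a closed-form affine collapse. Your remark on the $[\,\cdot\,]_+$ truncation is well placed: the paper silently works on the domain where all factors are nonnegative, and your reading of the identity as a formal polynomial identity is the cleanest way to state it.
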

However, the mismatch between the base $q$ and the exponent $1-q$ is inconvenient.
We exploit the $q=2-q^*$ duality to show this property holds for $q^*$ as well:
\begin{align*}
    &\AdaBracket{\exp_{q^*}{x}\cdot \exp_{q^*}{y}}^{q^*-1} = \AdaRectBracket{1 + (q^*-1)x}_{+} \cdot \AdaRectBracket{1 + (q^*-1)y}_{+}\\
    &= \AdaRectBracket{1 + (q^*-1)x + (q^*-1)y + (q^*-1)^2 xy}_{+}\\
    &= \expqstar{\,(x+y)}^{q^*-1} + (q^*-1)^2 xy.
\end{align*}
Now since we proved the two-point property for $q^*$,  by the same induction steps in \citep[Eq. (25)]{Yamano2004-properties-qlogexp} we conclude the proof.
The weighted average part \eq{\ref{eq:weighted_average}} comes immediately from \citep[Eq.(18)]{suyari2020-advantageQlog}.

\section{Implementation Details}\label{apdx:gym}

We list the hyperparameters for Gym environments in Table \ref{tb:gym}.
The epsilon threshold is fixed at $0.01$ from the beginning of learning. 
FC$\,n$ refers to the fully connected layer with $n$ activation units.

The Q-network uses 3 convolutional layers.
The epsilon greedy threshold is initialized at 1.0 and gradually decays to 0.01 at the end of first 10\% of learning.
We run the algorithms with the swept hyperparameters for full $5\times 10^7$ steps on the selected two Atari environments to pick the best hyperparameters.

We show in Figure \ref{fig:gym} the performance of MVI$(q)$ on Cartpole-v1 and Acrobot-v1, and 
the full learning curves of MVI($q$) on the Atari games in Figure \ref{fig:evi_mdqn_all}.
Figures \ref{fig:evi_tsallis_first} and \ref{fig:evi_tsallis_last}  show the full learning curves of Tsallis-VI.

\begin{figure}
    \centering
    \begin{minipage}[t]{0.475\textwidth}
        \includegraphics[width=\textwidth]{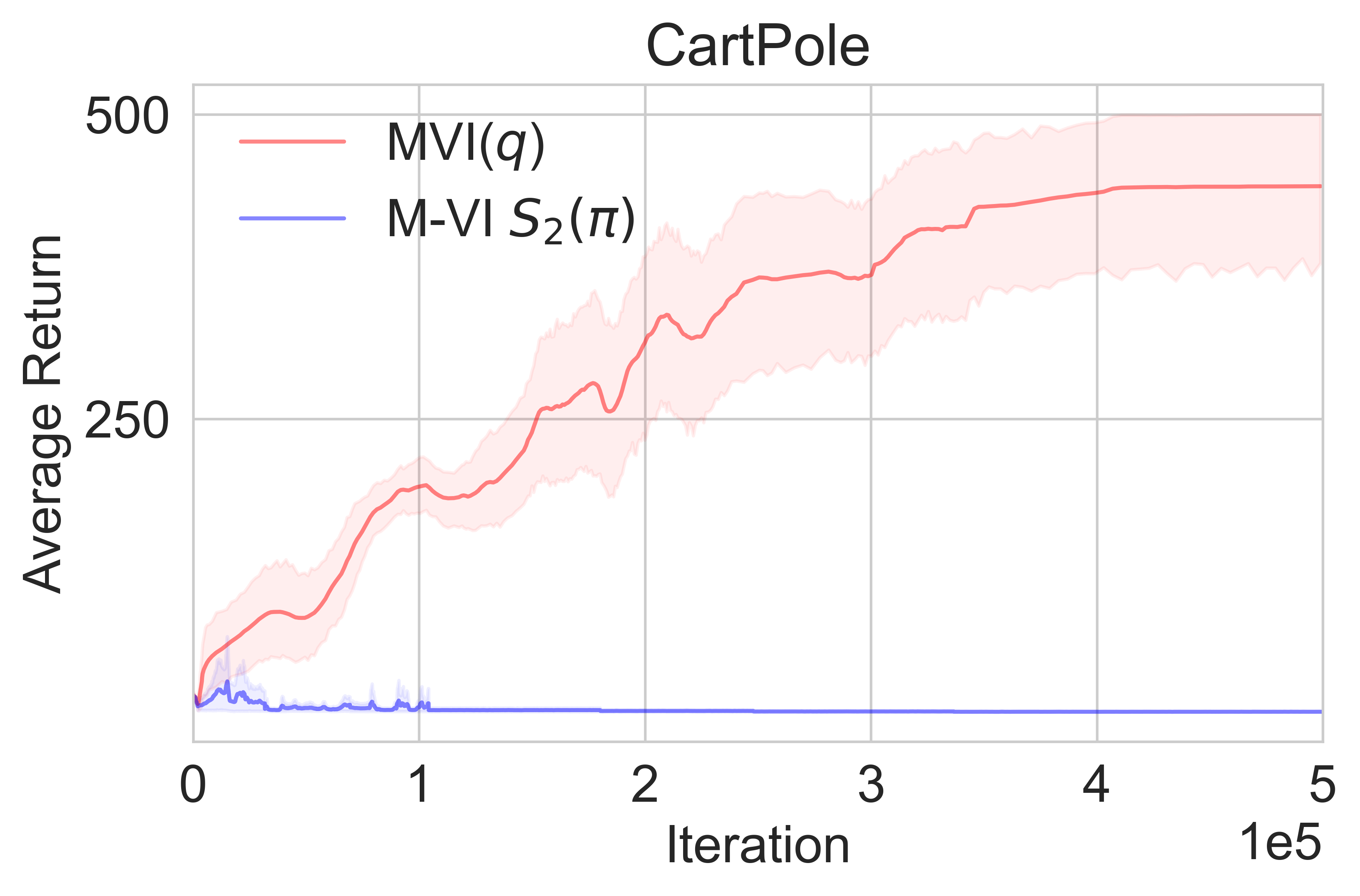}
    \end{minipage}
        \begin{minipage}[t]{0.475\textwidth}
        \includegraphics[width=\textwidth]{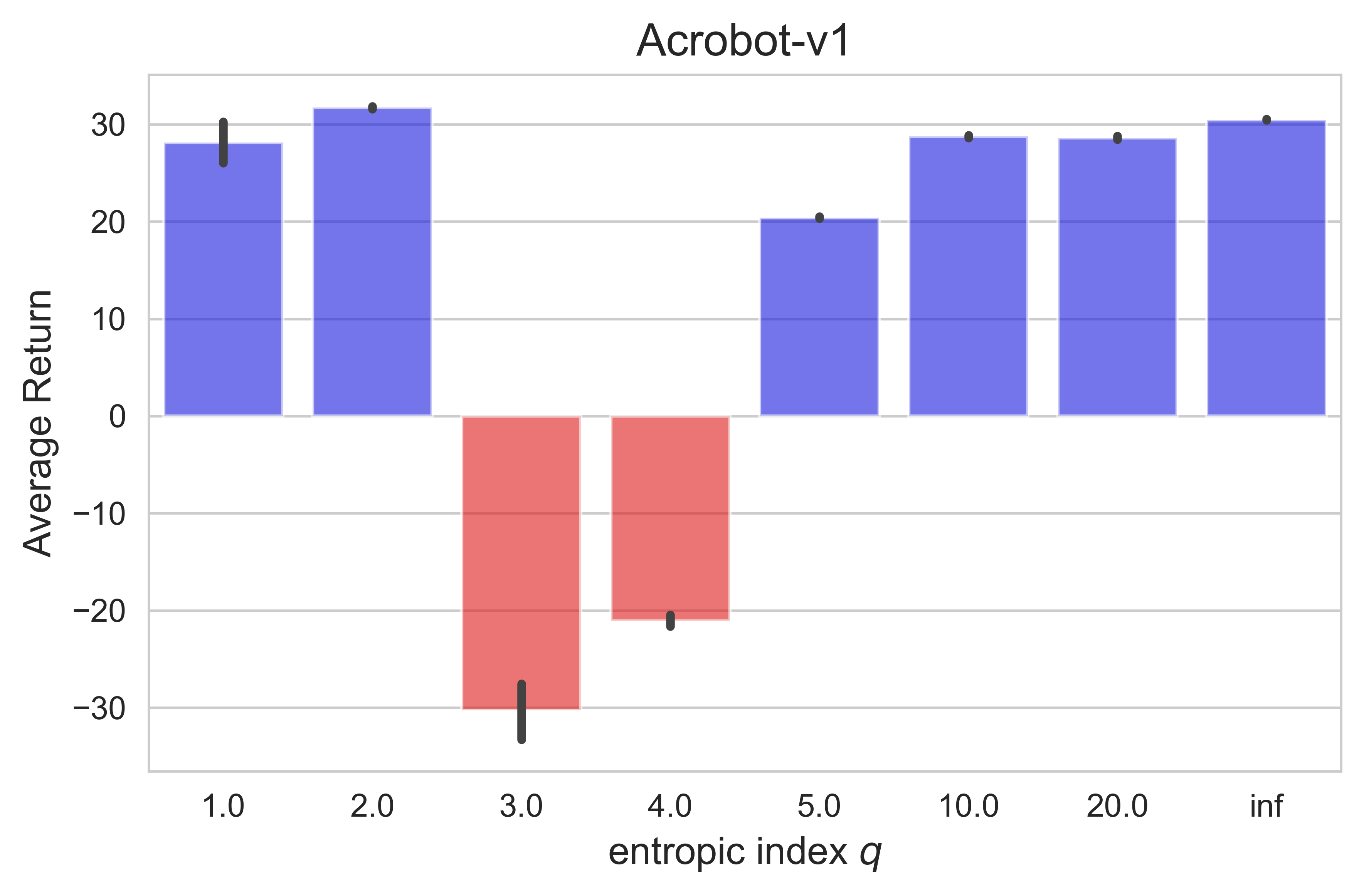}
    \end{minipage}
    \caption{(Left) MVI$(q)$ and MVI with logarithm $\ln\pi$ simply replaced to $\ln_q\pi$ on Cartpole-v1, when $q=2$. The results are averaged over 50 independent runs. The flat learning curve is due to the pseudo-additivity. 
    (Right) MVI$(q)$ on Acrobot-v1 with different $q$ choices. Each $q$ is independently fine-tuned. The black bar stands for 95\% confidence interval, averaged over 50 independent runs.}
    \label{fig:gym}
\end{figure}

\begin{table}[t]
    \centering
    \caption{Parameters used for Gym. }
    \label{tb:gym}
    \resizebox{0.99\textwidth}{!}{%
    \begin{tabular}{lllr}
        \cline{1-4}
        Network Parameter    & Value  & Algorithm Parameter & Value\\
        \hline
        $T$ (total steps)     &  $5\times 10^5$   &  $\gamma$ (discount rate)       & 0.99      \\
        $C$ (interaction period) &      4      &       $\epsilon$ (epsilon greedy threshold) & 0.01 \\
        $|B|$ (buffer size)      & $5 \times 10^4$    &       $\tau$ (Tsallis entropy coefficient) &  $0.03$ \\
        $B_{t}$ (batch size)  &  128  &       $\alpha$ (advantage coefficient) & $0.9$ \\
        $I$  (update period)      & $100$ (Car.) / $2500$ (Acro.) \\
        Q-network architecture & FC512 - FC512 \\
        activation units & ReLU \\
        optimizer & Adam \\
        optimizer learning rate & $10^{-3}$ \\
        \hline
    \end{tabular}
    }
  \end{table}

  \begin{table}[t!]
    \centering
    \caption{Parameters used for Atari games. }
    \label{tb:atari}
    \resizebox{0.99\textwidth}{!}{%
    \begin{tabular}{lllr}
        \cline{1-4}
        Network Parameter    & Value  & Algorithmic Parameter & Value\\
        \hline
        $T$ (total steps)     &  $5 \times 10^7$ & $\gamma$ (discount rate)       & 0.99  \\
        $C$ (interaction period) &      4 & $\tau_\texttt{MVI($q$)}$ ( MVI($q$) entropy coefficient) &  10 \\
        $|B|$ (buffer size)      & $1 \times 10^6$  & $\alpha_\texttt{MVI($q$)}$ ( MVI($q$) advantage coefficient) & 0.9 \\
        $B_{t}$ (batch size)  &  32 & $\tau_\texttt{Tsallis}$ (Tsallis-VI entropy coef.) &  10 \\
        $I$  (update period)      & $8000$ & $\alpha_\texttt{M-VI}$ (M-VI advantage coefficient) & 0.9\\ 
        activation units & ReLU & $\tau_\texttt{M-VI}$ (M-VI entropy coefficient) & 0.03\\
        optimizer & Adam & $\epsilon$ (epsilon greedy threshold) & $1.0 \rightarrow 0.01 |_{10\%}$ \\
        optimizer learning rate & $10^{-4}$\\
        Q-network architecture & \\ 
        \multicolumn{2}{c}{\quad $\text{Conv}^{4}_{8,8}32$ - $\text{Conv}^{2}_{4,4}64$ - $\text{Conv}^{1}_{3,3}64$ - FC512 - FC}  \\
        \hline
    \end{tabular}
    }
  \end{table}

  \begin{figure*}
    \centering
    \includegraphics[width=0.99\textwidth]{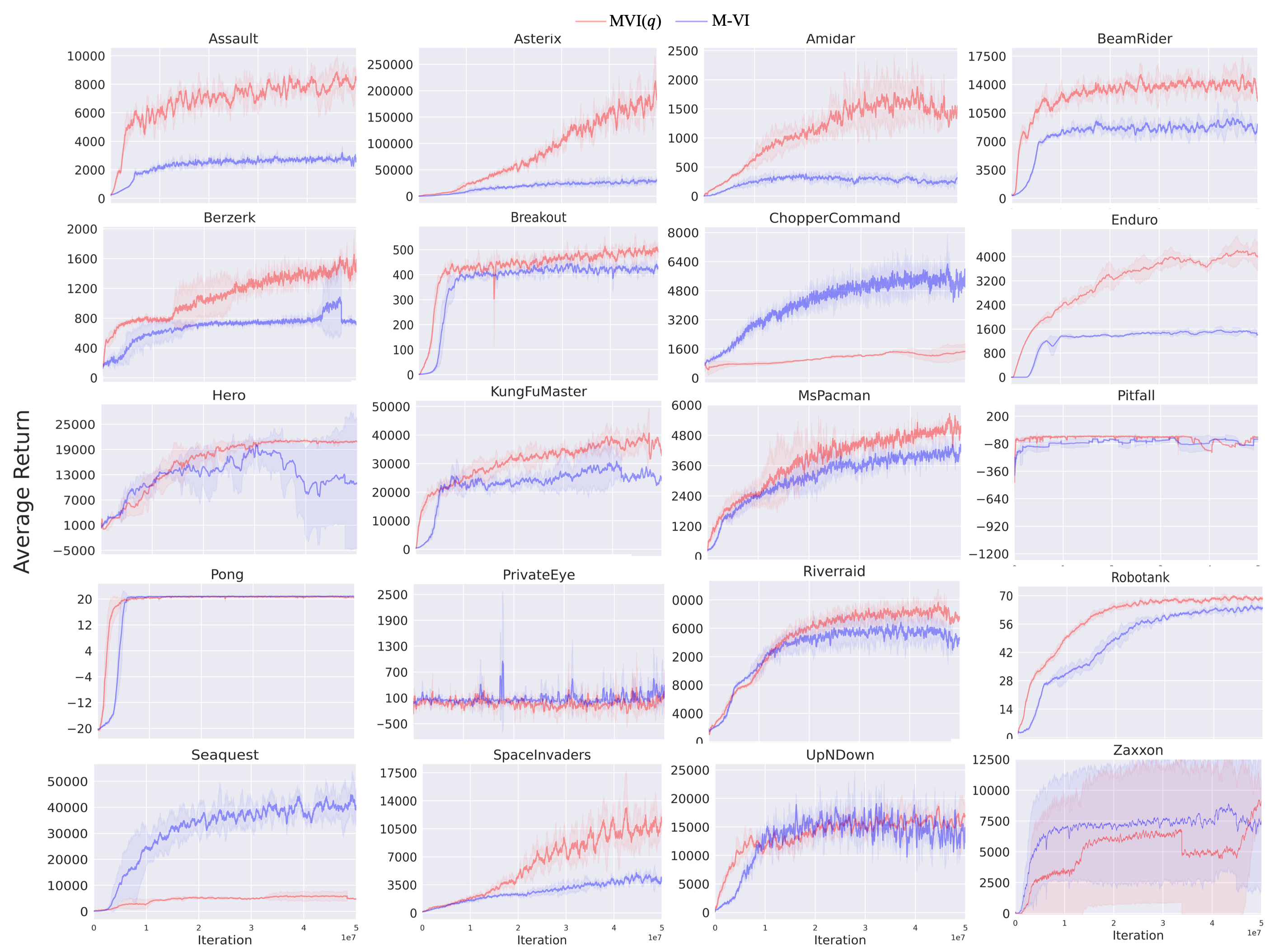}
    \caption{Learning curves of  MVI($q$) and M-VI on the selected Atari games.
    }
    \label{fig:evi_mdqn_all}
    \end{figure*}

    \begin{figure*}
    \centering
    \includegraphics[width=0.99\textwidth]{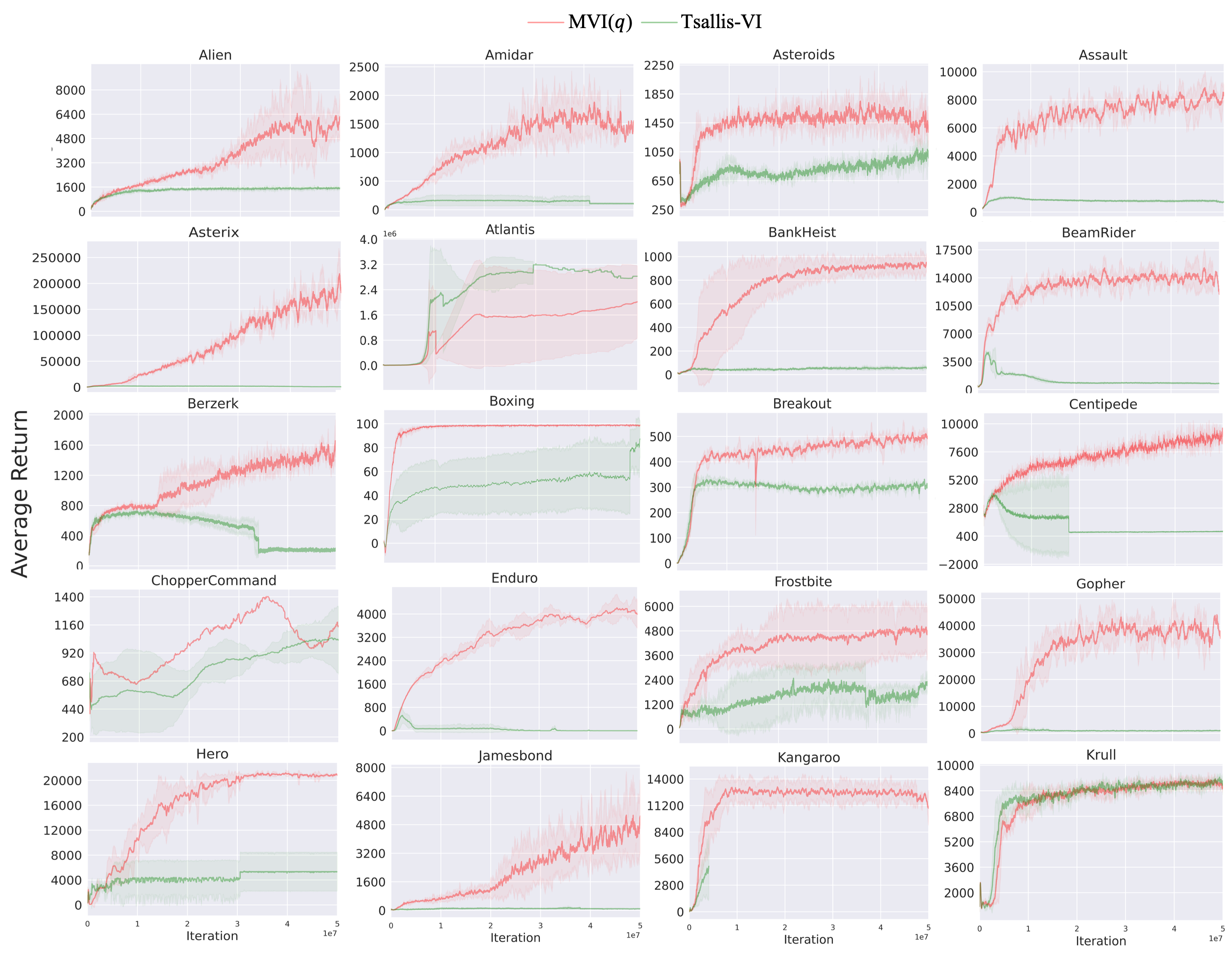}
    \caption{Learning curves of  MVI($q$) and Tsallis-VI on the selected Atari games.
    }
    \label{fig:evi_tsallis_first}
    \end{figure*}
    
    \begin{figure*}
    \centering
    \includegraphics[width=0.99\textwidth]{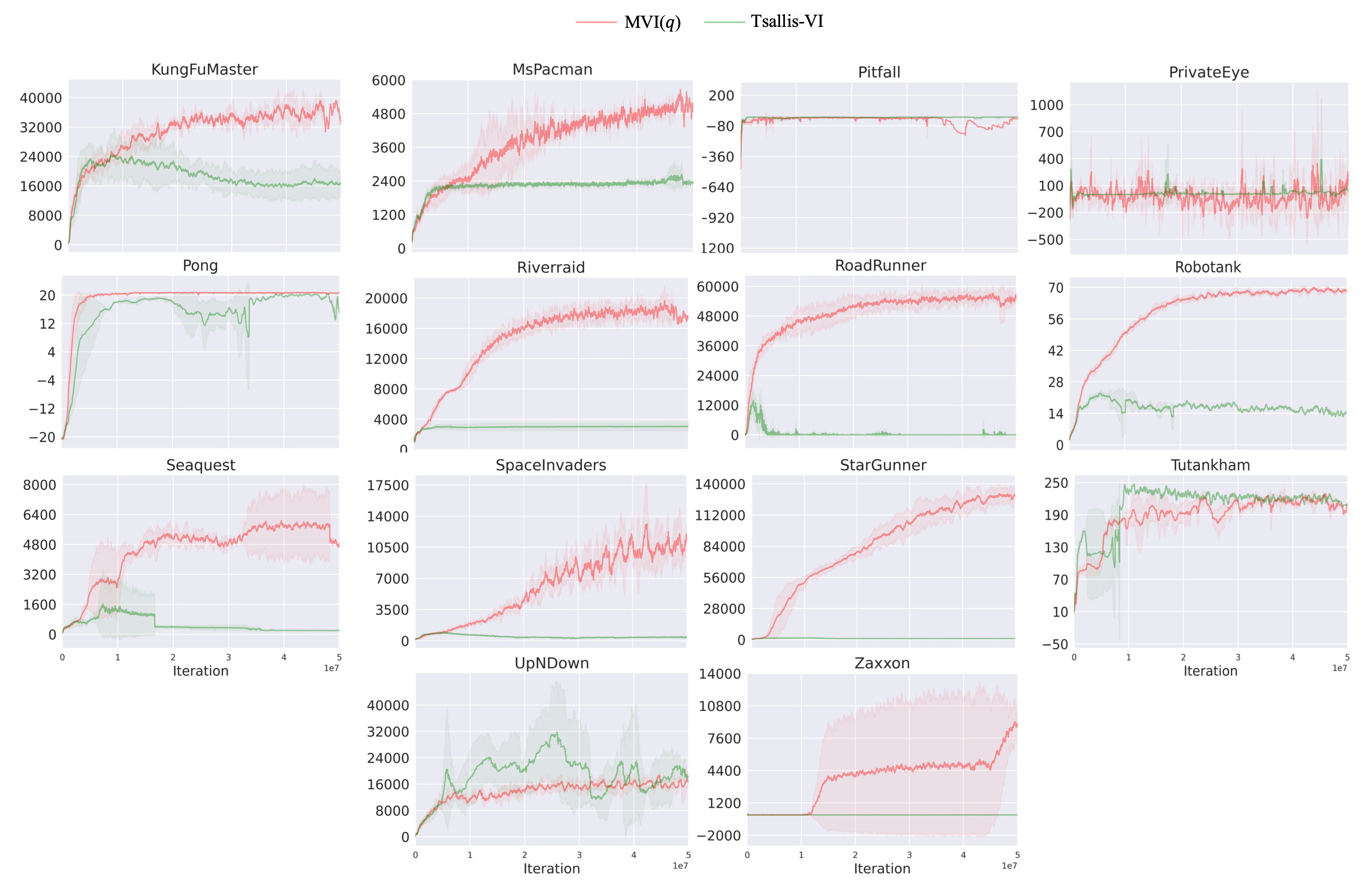}
    \caption{(cont'd)  MVI($q$) and Tsallis-VI on the selected Atari games.
    }
    \label{fig:evi_tsallis_last}
    \end{figure*}

\end{document}